\newtheorem{theorem}{Theorem}%
\newtheorem{lemma}[theorem]{Lemma}
\newtheorem{definition}[theorem]{Definition}
\newcommand{\OO}{\ensuremath{\mathcal{O}}}
\newcommand{\E}{\ensuremath{\mathbf{E}}}
\newcommand{\junk}[1]{{}}
\newcommand{\cnt}{\ensuremath{\mathrm{cnt}}}
\newcommand{\Y}{\mathop{\mathbf{Yes}}}
\newcommand{\eps}{\ensuremath{\varepsilon}}
\newcommand{\rG}{\ensuremath{\mathcal{D}}}
\newcommand{\N}{\mathop{\mathbf{No}}}
\title{Towards a Query-Optimal and Time-Efficient Algorithm for Clustering with a Faulty Oracle}
\author{Pan Peng\thanks{Department of Computer Science, University of Sheffield. Email: {p.peng@sheffield.ac.uk}}\\ \and  Jiapeng Zhang\thanks{Department of Computer Science, University of Southern California. Email: {jiapengz@usc.edu}}}
\date{}
\begin{document}
	
	\maketitle
	
	\begin{abstract}%
	Motivated by applications in crowdsourced entity resolution in database, signed edge prediction in social networks and  correlation clustering, Mazumdar and Saha [NIPS 2017] proposed an elegant theoretical model for studying clustering with a faulty oracle. In this model, given a set of $n$ items which belong to $k$ unknown groups (or clusters), our goal is to recover the clusters by asking pairwise queries to an oracle. This oracle can answer the query that ``do items $u$ and $v$ belong to the same cluster?''. However, the answer to each pairwise query errs with probability $\eps$, for some $\eps\in(0,\frac12)$. Mazumdar and Saha provided two algorithms under this model: one algorithm is query-optimal while time-inefficient (i.e., running in quasi-polynomial time), the other is time efficient (i.e., in polynomial time) while query-suboptimal. Larsen, Mitzenmacher and Tsourakakis [WWW 2020] then gave a new time-efficient algorithm for the special case of $2$ clusters, which is query-optimal if the bias $\delta:=1-2\eps$ of the model is large. It was left as an open question whether one can obtain a query-optimal, time-efficient
	algorithm for the general case of $k$ clusters and other regimes of $\delta$. 
	
	In this paper, we make progress on the above question and provide a time-efficient algorithm with nearly-optimal query complexity (up to a factor of $O(\log^2 n)$) for all constant $k$ and any $\delta$ in the regime when information-theoretic recovery is possible. Our algorithm is built on a connection to the stochastic block model. 
	\end{abstract}

\section{Introduction}

Clustering is a fundamental problem in machine learning with many applications. In this paper, we study an elegant theoretical model proposed by
\cite{mazumdar2017clustering} for studying clustering with the help of a faulty oracle. The model is defined as follows:

\paragraph{Model} Given a set $V=[n]:=\{1,\cdots,n\}$ of $n$ items which contains $k$ latent clusters $V_1,\cdots,V_k$ such that $\cup_i V_i=V$ and for any $1\leq i<j\leq k$, $V_i\cap V_j=\emptyset$. The clusters $V_1,\dots,V_k$ are unknown. We wish to recover them by making pairwise queries to an oracle $\OO$, which answers if the queried two vertices belong to the same cluster or not. This oracle gives correct answer with probability $1-\eps$, for some $\eps\in(0,\frac12)$. That is, for any  vertices $u,v\in V$, if $u$ and $v$ belong to the same cluster, then
\begin{equation*}
	\OO(u,v)=\begin{cases}
		+ &\textrm{with probability $1-\eps$, }\\
		- & \textrm{with probability $\eps$, }
	\end{cases}
\end{equation*}
and if $u,v$ belong to two different clusters, then
\begin{equation*}
	\OO(u,v)=\begin{cases}
		+ &\textrm{with probability $\eps$, }\\
		- & \textrm{with probability $1-\eps$.}
	\end{cases}
\end{equation*}
Equivalently, the model can be formalized as follows: Define a function $\tau: V\times V\to \{\pm 1\}$ such that   $\tau(u,v)=1$ if $u,v$ belong to the same cluster and $\tau(u,v)=-1$ if $u,v$ belong to different clusters. For any $u,v$, let $\eta_{u,v}\in \{\pm 1 \}$ be a random noise in the edge observation such that $\E[\eta_{u,v}]=\delta$. The noises $\eta_{u,v}$ are independent for all pairs $u,v\in V$. Then the oracle $\OO$ returns the sign of 
\[
\tau(u,v) \eta_{u,v}
\]
when the pair $u,v$ is queried. Note that by the above two formalization, it holds that $1-\eps=\frac{1}{2}+\frac{\delta}{2}$. In the following, we call $\delta$ the \emph{bias} of the model.  

It is assumed that repeating the same question to the oracle $\OO$, it always returns the same answer. (This was known as \emph{persistent noise} in the literature; see e.g. \citep{goldman1990exact}.) Our goal is to recover the latent clusters \emph{efficiently} (i.e., within polynomial time) with high probability by making as few queries to the oracle $\OO$ as possible.

\paragraph{Motivations} The above model captures several fundamental applications. In the \emph{entity resolution} (also known as the \emph{record linkage}) problem \citep{fellegi1969theory}, the goal is to find records in a data set that refer to the same entity across different data sources. Currently fully automated techniques for entity resolution has been unsatisfactory and current crowdsourcing platforms use human in the loop to help improve accuracy (see e.g. \citep{karger2011iterative,wang2012crowder,dalvi2013aggregating,gokhale2014corleone,vesdapunt2014crowdsourcing,mazumdar2017theoretical}). That is, the workers are asked to answer if any two items $u,v$ represent the same entity. It has been noted that the answers from non-expert workers are inevitably noisy. Furthermore, the goal of these crowdsourcing platforms is to use minimal number of queries to reduce cost and time for recovering the entities (clusters), which can be well modelled by the clustering with a faulty oracle.  

Another motivation is to predict the signed edges in a social network \citep{leskovec2010predicting}, where the sign (`$+$' or `$-$') on an edge indicates positive relation or negative relation between the corresponding two nodes. %
This problem can arise in many scenarios, e.g., voting on Wikipedia \citep{burke2008mopping} and making friends on Slashdot \citep{brzozowski2008friends}. Theoretically, there has been a line of work %
\citep{chen2014clustering,mitzenmacher2016predicting} 
that considers the model that allows the algorithm to query the sign of an edge $(u,v)$, which in turn can indicate whether $u,v$ belongs to the same cluster or not. It is further assumed that the answer to each query is correct with probability $1-\eps$, for some $\eps\in(0,\frac12)$. Thus, their model is also well captured by the previous model of clustering with a faulty oracle. {There is also some other related work on edge classification \citep{cesa2012correlation}.}

In addition, the model of clustering with noisy oracle is closely related to the problem of correlation clustering. In the correlation clustering problem \citep{bansal2004correlation}, we are given an undirected signed graph, and our goal is to partition the vertex set into clusters so that   the number of agreements\footnote{These are the number of $+$ edges inside clusters plus the number of $-$ edges between clusters.} is maximized or the number of disagreements\footnote{These are the number of $-$ edges inside clusters plus the number of $+$ edges between clusters.} is minimized. This problem is NP-hard and several approximation algorithms have been provided. In a variant formalization called noisy correlation clustering \citep{bansal2004correlation,mathieu2010correlation}, after given the ground truth clustering, the sign of each edge is flipped with some probability $\eps$. If the original graph is complete, then this is exactly the input of the problem of clustering with a faulty oracle. 

Finally, the model is strongly connected to the stochastic block model (SBM), which is popular model for studying graph clustering algorithms. In the SBM with parameters $N,k,p,q$ such that $0\leq q<p\leq 1$, denoted by SBM($N,k,p,q$), there is a set $V$ of $N$ vertices with a hidden $k$-partition $V_1,\cdots,V_k$ such that $\cup_i V_i=V$, where each part $V_i$ is called a \emph{cluster}. A graph $G$ is generated from the SBM($N,k,p,q$) model, if for any two vertices $u,v\in V$, an edge is added between $u,v$ with probability $p$ if $u,v$ are from the same cluster, and with probability $q$ if $u,v$ are from two different clusters. There has been a vast amount of research on recovering the underlying clusters from the SBM with different ranges of parameters in the past decade (see the recent survey \citep{abbe2017community}). Consider the noisy clustering model with and parameters $n,k,\delta$. Suppose that we make queries on all pairs $u,v\in V$, then the graph $G$ that is obtained by adding all $+$ edges answered by the oracle $\OO$ is exactly the graph that is  generated from the SBM model with parameters $N=n$, $k$,  $p=\frac{1}{2}+\frac{\delta}{2}$ and $q=\frac{1}{2}-\frac{\delta}{2}$. However, in our problem, our goal is to recover the clusters by making \emph{sublinear} number of queries, i.e., without seeing the whole graph. 

\paragraph{State-of-the-art}  %
\cite{mazumdar2017clustering} gave an \emph{inefficient} algorithm that perform $O(\frac{nk\log n}{\delta^2})$ queries to the oracle that recovers all the clusters of size $\Omega(\frac{\log n}{\delta^2})$. The query complexity of this algorithm nearly matches an information-theoretic lower bound $\Omega(\frac{nk}{\delta^2})$ presented by the same authors. The running time of their algorithm is $O\left(\left({k\log n}/{\delta^2}\right)^{(\log n)/\delta^2}\right)$, which is quasi-polynomial, and there is an inherent obstacle to push this algorithm to be efficient (see Section \ref{sec: optimal} for more details). Towards efficient algorithms, they designed another algorithm that runs in time $O(\frac{nk\log n}{\delta^2}+k(\frac{k^2\log n}{\delta^4})^\omega)$, makes $O(\frac{nk\log n}{\delta^2}+\min\{\frac{nk^2\log n}{\delta^4},   \frac{k^5\log^2 n}{\delta^8} \})$ queries and recovers all clusters of size at least $\Omega(\frac{k\log n}{\delta^4})$, where $\omega$ is the matrix multiplication exponent. %

In a follow-up work,  \cite{green2020clustering} proposed an improved algorithm for the case $k=2$, i.e., two clusters. This algorithm runs in time $O(\frac{n\log n}{\delta^2}+\frac{\log^3 n}{\delta^8})$ and makes $O(\frac{n\log n}{\delta^2}+\frac{\log^2 n}{\delta^6})$ queries. See Table \ref{tb:comparison} for a comparison of these results. 

Note that the above two efficient algorithms are query-suboptimal when $\delta$ is small, i.e., $\delta = o (n^{-1/4})$, even for $k=2$.  Due to this, \cite{green2020clustering} raised the following open question: 

\begin{quote}
	``\emph{Can we design a query-optimal, time-efficient algorithm that performs $O(\frac{kn\log n}{\delta^2})$ queries for all $0<\delta <1$?}''
\end{quote}
It is the main question we are trying to address in this paper. Note that for any non-trivial algorithm with query complexity $O(\frac{nk\log n}{\delta^2})$, it suffices to assume that $\delta\geq(k\log n/n)^{1/2}$, as the maximum number of queries one can make is $n^2$.
\subsection{Our results}

We give an algorithm with the following performance guarantee for the problem of clustering with a faulty oracle. 
\begin{theorem}\label{thm:main}
	There exists a polynomial time algorithm $\textsc{NosiyClustering}$ %
	that recovers all the clusters of size $\Omega(\frac{k^4\log n}{\delta^2})$ with success probability $1-o_n(1)$. The total number queries that $\textsc{NosiyClustering}$ performs to the faulty oracle $\OO$ is $O(\frac{n k\log n}{\delta^2}+\frac{k^{10}\log^2 n}{\delta^4})$.
\end{theorem}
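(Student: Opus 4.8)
The plan is to exploit the connection to the stochastic block model noted above. Querying \emph{every} pair inside a vertex subset $S$ produces precisely a sample of $\mathrm{SBM}(|S|,k,\tfrac12+\tfrac\delta2,\tfrac12-\tfrac\delta2)$ on $S$, with the true partition induced on $S$; and for these parameters an efficient spectral or SDP-based recovery routine reconstructs, up to an $o(1)$ fraction of misclassified vertices, every community of $S$ whose size exceeds a threshold of the form $\mathrm{poly}(k)\cdot\log|S|/\delta^2$. So the first move is: pick a subset $S$ with $|S|=s=\mathrm{poly}(k)\cdot\log n/\delta^2$ (chosen so that $s^2$ fits the claimed budget), spend $\binom{s}{2}$ queries on all pairs within $S$, and run the SBM recovery routine to obtain an approximate clustering $\widehat S_1,\dots,\widehat S_{k'}$ of $S$. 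A cheap ``merge'' post-processing pass would then, for each pair of recovered pieces, query $O(\log n/\delta^2)$ cross pairs and fuse the pieces that look like a single cluster, guarding against a routine that over-splits a genuine community.

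Next I would turn each recovered piece into a \emph{seed}: a set $T_i\subseteq\widehat S_i$ of $\Theta(\log n/\delta^2)$ vertices that is almost pure, i.e.\ a $(1-o(1))$-fraction of $T_i$ lies in one common true cluster. Given a seed, an arbitrary $v\in V$ is classified by querying $v$ against all of $T_i$ and taking a majority vote; a Chernoff bound together with the bias $\delta$ shows that, with probability $1-1/\mathrm{poly}(n)$, the majority is $+$ exactly when $v$ lies in the same true cluster as the bulk of $T_i$, and a union bound over all $n$ vertices and the $\le k$ seeds makes this simultaneously correct, giving overall success probability $1-o_n(1)$. This classification step is what spends the $O(nk\log n/\delta^2)$ term: each vertex is compared against at most $k$ seeds, each of size $O(\log n/\delta^2)$.

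A single sample $S$ only produces seeds for the true clusters that are well represented in $S$, i.e.\ clusters of size $\gtrsim n\cdot\mathrm{poly}(k)\log n/(\delta^2 s)$, so a uniform sample misses small clusters when the $V_i$ are badly unbalanced. The remedy is iteration: after classifying, \emph{delete} all classified vertices and repeat the whole cycle---fresh sample, SBM recovery, merge, new seeds, classify---on the remaining set $R$. At every stage $R$ is a union of not-yet-recovered true clusters, and there are at most $k$ of them, so the largest occupies at least a $1/k$-fraction of $R$ and hence clears the subsample recovery threshold (taking $S=R$ outright once $R$ has shrunk below $s$); therefore each round recovers at least one new cluster, the process stops within $k$ rounds, and by then every cluster of size $\Omega(k^4\log n/\delta^2)$ has been output. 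Accounting: the sampling/SBM work is $O(k)\cdot O(s^2)=O(k^{10}\log^2 n/\delta^4)$ for a suitable $s=\mathrm{poly}(k)\log n/\delta^2$, the merge passes add $O(k^3\log n/\delta^2)$, and---since each vertex is assigned the first time its own cluster's seed is produced and is compared to at most $k$ seeds in total---classification costs $O(nk\log n/\delta^2)$ altogether.

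The main obstacle is the quantitative bookkeeping inside the SBM step. I need an efficient recovery guarantee that (i) tolerates wildly unequal community sizes within $S$ (and ideally an overestimate of the effective number of communities), (ii) only asks for almost-exact rather than exact recovery of the large communities, since an $o(1)$ contamination of a seed is harmless, and (iii) states its threshold as a clean $\mathrm{poly}(k)\log|S|/\delta^2$ so that the sample size $s$, the resulting $k^{10}\log^2 n/\delta^4$ term, and the recoverable cluster size $\Omega(k^4\log n/\delta^2)$ all fall out. Propagating the polynomial-in-$k$ losses through the subsampling (concentration of $|V_i\cap S|$ around $s|V_i\cap R|/|R|$), through the SBM algorithm's $k$-dependence, and through the at-most-$k$ rounds is where the real effort goes; the seed construction, the majority-vote analysis, and the union bounds are routine.
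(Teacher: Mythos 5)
Your outline matches the paper's at the top level (sample a small set, treat the query graph on it as an SBM, extract seed sets, grow each seed to a full cluster by majority vote, delete and iterate), but there is a genuine gap at exactly the point you flag as ``the main obstacle,'' and the paper's main technical contribution is precisely the machinery you are missing.

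You ask for a black-box SBM recovery routine that ``tolerates wildly unequal community sizes within $S$'' and returns almost-exact recovery of every large community. No such clean primitive is invoked in the paper, and none of the standard spectral/SDP guarantees (including the one the paper actually uses, Vu's theorem) has that form: Vu's algorithm requires the instance to be $b$-balanced, i.e.\ \emph{every} community must be at least a $b/k$ fraction of $|S|$, and gives exact recovery only under that hypothesis. Your argument that ``the largest cluster occupies at least a $1/k$-fraction of $R$, hence clears the threshold'' does not help, because the bottleneck is the \emph{smallest} surviving cluster in $R$, not the largest; a sample $S\subset R$ containing one huge and one tiny community is exactly the regime in which these algorithms break. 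So your iteration does not, by itself, reduce to a case the known SBM tool can handle.

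The paper closes this gap with three pieces you do not have. First, a \emph{size-gap lemma} (Lemma~\ref{lemma: sizegap}): if the instance is not $b$-balanced, then the sorted cluster sizes $s_1\ge\dots\ge s_k$ admit an index $h$ with a multiplicative gap of order $bn/k^2$ between $s_h$ and $s_{h+1}$. Second, a \emph{degree-filtering} step on the sampled graph $H_T$: since a vertex's expected degree in $H_T$ is an affine function of its own cluster's size, thresholding degrees at $d_h$ cleanly separates, with high probability, the sampled vertices of the top-$h$ clusters from the rest; the survivors form a $b''$-balanced instance on which Vu's theorem \emph{does} apply and gives exact recovery. Third, since $h$ is unknown, an \emph{enumeration over $h\in[k]$} together with a cheap bias tester (\textsc{TestBias}) that certifies whether each output piece is $(\eta/4,C)$-biased for some true cluster $C$; any accepted $h$ yields usable seeds, and the size-gap lemma guarantees that the true $h$ is accepted. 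This is also why the paper needs no ``merge'' post-processing pass: it gets exact recovery on the filtered balanced subinstance, rather than approximate recovery on an arbitrary one. Without the size-gap/filter/enumerate mechanism (or a genuinely different substitute), the step you defer to ``quantitative bookkeeping inside the SBM step'' is not bookkeeping but the heart of the proof, and the proposal as written does not establish the theorem.
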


\begin{table*}[t]
	\centering
	\begin{tabular}{|c|c|c|c|}
		\hline
		\textbf{\# clusters}	& \textbf{query complexity} & \textbf{time-efficient ?} & \textbf{reference}   \\
		\hline
		\multirow{3}{0em}{$k$}	& $O(\frac{nk\log n}{\delta^2})$ & $\N$ &   \multirow{3}{10em}{\citep{mazumdar2017clustering}} \\ \cline{2-3}
		& %
		$O(\frac{nk\log n}{\delta^2}+\min\{\frac{nk^2\log n}{\delta^4}, \frac{k^5\log^2 n}{\delta^8} \})$
		& %
		$\Y$
		&    \\ \cline{2-3}
		&  $\Omega(\frac{nk}{\delta^2})$ & \textbf{Lower bound} &    \\
		\hline
		$2$ &$O(\frac{n\log n}{\delta^2}+\frac{\log^2 n}{\delta^6})$ & $\Y$ & \citep{green2020clustering}\\
		\hline 
		\hline
		\multirow{2}{0em}{$k$}
		& nearly-balanced: $O(\frac{nk\log n}{\delta^2}+\frac{k^4\log^2n}{\delta^4})$& $\Y$ & \multirow{2}{10em}{\textbf{this work}}\\ \cline{2-3}
		
		&$O(\frac{n k\log n}{\delta^2}+\frac{k^{10}\log^2 n}{\delta^4})$& $\Y$ & \\ %

		\hline 
		
	\end{tabular}
	\caption{Comparison of algorithms for clustering with a faulty oracle. We say an algorithm is time-efficient, if it runs in polynomial time (in $n,k,1/\delta$). We stress that all the upper bound holds for algorithms success probability at least $1-o_n(1)$, while the lower bound is for any algorithm with constant success probability.}\label{tb:comparison}
\vspace{1em}
\end{table*}

Note that for any constant $k$, the {query complexity of our algorithm} \textsc{NoisyClustering} in Theorem \ref{thm:main} is 
\begin{equation*}
	O(\frac{n \log n}{\delta^2}+\frac{\log^2 n}{\delta^4}) =
	\left\{\begin{array}{cc}
		O(\frac{n\log n}{\delta^2})    & \text{if $\delta=\omega((\frac{\log n}{n})^{1/2})$} \\
		O(\frac{n\log^2 n}{\delta^2})     & \text{if $\delta\in [\Omega((\frac{1}{n})^{1/2}), O((\frac{\log n}{n})^{1/2}))$} 
	\end{array}\right.
\end{equation*}
Thus, as long as $\delta=\Omega((\frac{1}{n})^{1/2})$ (i.e., $\delta$ is in the regime when information-theoretic recovery is possible), our algorithm achieves nearly-optimal query complexity (up to a factor of $O(\log^2 n)$). On the other hand, if $\delta=o((\frac{1}{n})^{1/2})$, it is impossible to recover the latent clusters, which follows from the information-theoretic lower bound $\Omega(\frac{n}{\delta^2})$ and an inherent restriction on the maximum number of queries, i.e., $n^2$, as there are at most $n^2$ edges.  Therefore, %
\emph{we {almost} fully resolve the aforementioned open question by \cite{green2020clustering} for any constant $k\geq 2$. }

The main focus on this paper is to optimize the dependency on $\delta$. We do not attempt to optimize the dependency on $k$. By combining ideas from \cite{mazumdar2017clustering}, we believe it is possible to slightly improve the term $k^{10}$. However several evidences suggested there is an inherent obstacle to match the information theoretical lower bound by efficient algorithms. See Section \ref{sec: optimal} for more details. The algorithm \textsc{NoisyClustering} is built upon a simple algorithm  for the case that the underlying clustering $V_1,\dots,V_k$ are nearly-balanced, i.e., each cluster $V_i$ has size $\Omega(\frac{n}{k})$. For the latter case, we achieve a slightly better algorithm. Formally, we define a $b$-balanced partition as follows. 
\begin{definition}
	Let $b\in [0,1]$. Given a vertex set $V$ and a partition $V_1,\dots, V_k$ such that $\cup_i V_i=V$, we call $V_1,\dots,V_k$ a $b$-balanced partition, if for each $i$, $|V_i|\geq b n/k$. 
\end{definition}
We show the following result for %
the case that the underlying partition is the $b$-balanced. 
\begin{theorem}
	\label{thm: balanced}
	Let $b\in(0,1]$. Let $n\geq \frac{C_0k^2\log^2 n}{b^2\delta^2}$ for some constant $C_0>0$. Suppose that the underlying partition $V_1,\dots,V_k$ of $V=[n]$ is $b$-balanced. There is a polynomial time algorithm %
	that recovers all the clusters with success probability $1- o_n(1)$. 
	The total number queries that the algorithm performs to the faulty oracle $\mathcal{O}$ is $O(nk\cdot\log n/\delta^2 +k^4\cdot\log^2n/(b^4\delta^4) )$.
\end{theorem}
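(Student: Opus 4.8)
The plan is to reduce the clustering problem to an instance of the stochastic block model and then use a subsampling strategy. First I would query a small random subset $S\subseteq V$ of size $m=\Theta(k^2\log^2 n/(b^2\delta^2))$ on all $\binom{m}{2}$ pairs; since the partition is $b$-balanced, every cluster $V_i$ contributes at least $bm/k = \Omega(k\log^2 n/(b\delta^2))$ vertices to $S$ with high probability by a Chernoff bound. The graph induced on $S$ by the ``$+$'' answers is exactly an SBM sample on $m$ vertices with $p=\frac12+\frac\delta2$, $q=\frac12-\frac\delta2$, in which every hidden block has size $\Omega(k\log^2 n/(b\delta^2))$. I would then invoke an off-the-shelf polynomial-time SBM recovery routine (e.g.\ spectral clustering or an SDP) to exactly recover the partition $S_1,\dots,S_k$ of $S$; the parameter choice makes the per-cluster size comfortably above the $\Omega(\log m/\delta^2)$ threshold needed for exact recovery, so this succeeds with probability $1-o_n(1)$. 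The number of queries used in this phase is $\binom{m}{2}=O(k^4\log^2 n/(b^4\delta^4))$, matching the second term in the claimed bound.

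Second, having a correctly labelled ``committee'' $S_i$ inside each cluster, I would classify every remaining vertex $v\in V\setminus S$ by a majority vote: query $v$ against $O(\log n/\delta^2)$ vertices from each $S_i$ (or against all of $S$, which is already $O(k\log n/\delta^2)$ per vertex after rebalancing the committee sizes), and assign $v$ to the cluster $i$ maximizing the fraction of ``$+$'' answers. For the correct $i$ the expectation of this fraction is $\frac12+\frac\delta2$ and for every wrong $i$ it is $\frac12-\frac\delta2$, a gap of $\delta$; a Hoeffding bound with $O(\log n/\delta^2)$ samples per cluster drives the error probability below $n^{-2}$, so by a union bound all $n$ vertices are labelled correctly with probability $1-o_n(1)$. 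This phase costs $O(nk\log n/\delta^2)$ queries, giving the first term. Running time is polynomial because the SBM subroutine is polynomial in $m$ and the voting phase is linear in the number of queries.

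The main obstacle is the small-$\delta$ regime, where $\delta$ can be as small as $\Theta(\sqrt{k^2\log^2 n/(b^2 n)})$. There the committee size $m$ is only slightly sublinear, and one must be careful that (i) $m\le n$, which is exactly guaranteed by the hypothesis $n\ge C_0 k^2\log^2 n/(b^2\delta^2)$ with $C_0$ chosen large enough to absorb the constants in the Chernoff and SBM-recovery bounds, and (ii) the SBM recovery subroutine actually tolerates blocks that are this small relative to the number of blocks $k$ — this is why the clusters-in-$S$ lower bound $\Omega(k\log^2 n/(b\delta^2))$ carries an extra $\log n/(b\delta)$-type slack beyond the bare $\log m/\delta^2$ threshold, and checking that this slack is enough for a concrete polynomial-time recovery algorithm (rather than the information-theoretically optimal but inefficient one) is the delicate quantitative point. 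A secondary technical step is handling the dependence introduced by reusing the same committee $S$ for all $n-m$ votes: since the committee labels are fixed once $S$ is (correctly) recovered, conditioning on that event makes the $n-m$ votes independent, so the union bound goes through cleanly.
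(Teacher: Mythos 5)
Your approach is essentially the paper's: sample a subset, view the induced ``$+$''-graph as an SBM sample, run a polynomial-time SBM recovery routine (the paper uses Vu's spectral algorithm, Theorem~\ref{thm:Vu18}) to get within-sample sub-clusters, then classify each remaining vertex by querying it against a committee of size $\Theta(\log n/\delta^2)$ per cluster. Two quantitative slips, though. First, your sample size $m=\Theta(k^2\log^2 n/(b^2\delta^2))$ carries a spurious extra $\log n$ factor (presumably imported from the hypothesis $n\geq C_0 k^2\log^2 n/(b^2\delta^2)$, which is there to ensure $|T|\leq n$, not to size $T$): with your $m$, $\binom{m}{2}=\Theta(k^4\log^4 n/(b^4\delta^4))$, not $O(k^4\log^2 n/(b^4\delta^4))$ as you assert, so your Phase~1 query count is off by $\log^2 n$. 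The correct choice, as in the paper, is $m=\Theta(k^2\log n/(b^2\delta^2))$, which does give $\binom{m}{2}=O(k^4\log^2 n/(b^4\delta^4))$. Second, and relatedly, the SBM recovery threshold you quote, ``per-cluster size $\Omega(\log m/\delta^2)$,'' is not the right condition for a concrete polynomial-time algorithm: Vu's theorem needs $m\geq c_0 k^2\log m/(b'^2\delta^2)$, i.e.\ the per-cluster size must be $\Omega\bigl(\tfrac{k}{b}\cdot\tfrac{\log m}{\delta^2}\bigr)$, carrying an extra $k/b$ factor that you do not account for (you name the slack as ``$\log n/(b\delta)$-type,'' which is neither the right form nor needed once $m$ is chosen correctly). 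With $m=\Theta(k^2\log n/(b^2\delta^2))$ each $T_i$ has size $\Omega(k\log n/(b\delta^2))$, which meets the $k/b$-inflated threshold exactly as the paper verifies, so the argument goes through; you just need to do that quantitative check rather than gesture at the looser $\log m/\delta^2$ bound.
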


For any constant $b>0$, the query complexity of the above algorithm is $O(\frac{k^{4}\log^2n}{\delta^4}+\frac{k n\log n}{\delta^2})$, %
which is in comparison to the information-theoretic lower bound $\Omega(\frac{nk}{\delta^2})$ that also holds for the nearly-balanced instance \citep{mazumdar2017clustering}. The query complexity almost matches the lower bound when $k = o((\delta^2\cdot n)^{1/3}$), which leaves open in the range $(\delta^2\cdot n)^{1/3}\leq k\leq \delta^2\cdot n$. Interestingly, there exists evidence suggesting that there is no \emph{efficient} algorithm matching the information theoretical lower bound when $k$ is large. We refer to Section \ref{sec: optimal} for a more detailed discussion.

\subsection{Discussion of previous approaches and an overview of our algorithms} %
We first sketch the main idea underlying the algorithms in \citep{mazumdar2017clustering,green2020clustering}. Their algorithms do the following: 
\begin{enumerate}
	\item select a subset $T$ of $t=\textrm{poly}(k\log n/\delta)$ vertices, and build a graph $H_T=(T,E_T)$ by making queries for all pairs $u,v\in T$ and defining the edge set $E_T$ according to the query answers;
	\item find all sub-clusters $X$ of size $\Omega(\frac{\log n}{\delta^2})$ from the $T$ by making use of the graph $H_T$, where a set $X$ is a sub-cluster if  $X\subseteq V_i$ for some cluster $V_i$;
	\item grow each of the sub-clusters $X$ to $V_i$: arbitrarily select a subset $X_0\subseteq X$ of size $\Theta(\frac{\log n}{\delta^2})$ and add all vertices $v\in V$ to $X$ such that the number of `$+$' neighbors of $v$ in $X_0$ is more than  $\frac{|X_0|}{2}$.
\end{enumerate}
Then the algorithm removes all the identified clusters and repeat the above process if the number of remaining vertices is still large and more clusters need to be identified.

Both of the previous two efficient algorithms are based on some `local' approaches of finding sub-clusters from $H_T$ (in Step 2 above), i.e., by counting the number of `$+$' neighbors and/or shared neighbors of vertices in $T$. Such `local' approaches require the algorithm to choose a large subset $T$ whose size eventually results in the sub-optimality of the total number of queries to the oracle. We also note that the query-optimal algorithm in  \citep{mazumdar2017clustering} is a `global' approach in the sense that it makes use of a large subgraph of $H_T$ to cluster the vertices in $T$. However their subroutine for finding the subgraph requires quasi-polynomial time, which can not be improved to polynomial time, assuming that the hidden clique problem is hard in average case, which is a well-believed assumption in complexity theory.

\paragraph{Our approach.} Our algorithm is built upon the same framework, while uses several new ideas. One of our key observations is that we can make use of the `global' and time-efficient algorithms for clustering graphs generated from SBM with appropriate parameters to find sub-clusters in the small representative graph $H_T$, when the input instance is nearly-balanced. Slightly more precisely, note that for any subset $T\subset V$, if we let $E_T$ be the set of all `$+$' edges from the query answers and let $H_T=(V,E_T)$, then we can equivalently view $H_T$ as generated from the stochastic block model SBM($|T|,k,p,q$) with $p=\frac{1}{2}+\frac{\delta}{2}$, $q=\frac{1}{2}-\frac{\delta}{2}$. Previous research (e.g,. \citep{mcsherry2001spectral,vu2018simple}) suggests that if $H_T$ contains $k$ nearly-balanced clusters and the parameters $|T|,p,q,k$ satisfy certain conditions (see Theorem \ref{thm:Vu_original}), then with high probability, we can efficiently recover all the clusters in $T$. Now if the original instance $V_1,\dots,V_k$ is nearly-balanced (i.e., $|V_i|\geq \frac{bn}{k}$, $i\leq k$, for some constant $0<b<1$), then we can show that a randomly sample set $T$ with $\Theta(\frac{k^2\log n}{\delta^2})$ vertices will satisfy both the nearly-balanced requirement of $H_T$ and the condition for clustering SBM. Then by applying one algorithm (specifically, Vu's algorithm; see Theorem \ref{thm:Vu18}) for clustering the graph $H_T$ from SBM($|T|,k,p,q$) to find all the sub-clusters $X_1,\dots,X_k$, and growing each sub-cluster as described before, we obtain our algorithm for clustering the nearly-balanced instance with improved performance guarantee. We give details in Section \ref{section: balanced}.

For the unbalanced instance, i.e., there exists at least one cluster of size less than $\frac{b n}{k}$, we have to modify this algorithm since  unbalanced instance is a barrier to algorithms for the stochastic block model. Our second observation is that there must exist a \emph{size-gap} between different clusters, which allows us to filter out the small size clusters. The remaining large clusters are again nearly-balanced (with different balance ratio), which can be clustered as before. Concretely, let $s_1\geq \dots\geq s_k$ be the size of each cluster. If $s_{k}<\frac{bn}{k}$, we show there is a $\mu>0$ and $h\in[k]$ such that,
\[
s_1\geq\cdots \geq s_{h} \geq \mu\cdot n >(\mu - b\cdot k^{-2})\cdot n \geq s_{h+1}\cdots \geq s_{k}.
\]
Notice that for every $i\le h$ and $v\in V_{i}$, the expectation of the degree of $v$ in the random graph $G$ is
\[
\underset{G}{\E}\left[|\{u: (u,v)\in E(G)\}|\right] = \left(\frac12+\frac{\delta}{2}\right) |V_{i}| + \left(\frac12-\frac{\delta}{2}\right) (n-|V_{i}|) \geq  \left(\frac12-\frac{\delta}{2}\right) n + \delta\mu n
\]
On the other hand, for each $i'>h$ and $v'\in V_{i'}$, the expectation of degree of $v$ is 
\[
\underset{G}{\E}\left[|\{u: (u,v')\in E(G)\}|\right] = \left(\frac12+\frac{\delta}{2}\right) |V_{i}| + \left(\frac12-\frac{\delta}{2}\right) (n-|V_{i}|)\leq  \left(\frac12-\frac{\delta}{2}\right) n + \delta\mu n -\delta \cdot b\cdot k^{2} n
\]
Therefore, there is a $\delta \cdot b\cdot k^{2} n$ gap between large clusters and small clusters (in expectation). It is easy to show that the gap also exists with high probability by applying the standard concentration bound.

Now if we sample a subset $T$ of size at least $\Omega(\frac{k^4\log n}{\delta^2})$, then we can guarantee that with high probability, for all vertices in large clusters $V_i$ ($i\leq h$), they have degree larger than some threshold $d_h$ in $H_T$, while for all vertices in small clusters   $V_i$ ($i>h$), they have degree smaller than $d_h$ in $H_T$. In this way, we can filter out all vertices in $T$ that belong to small clusters and let the remaining vertex set be $T'$ and the corresponding subgraph be $H_{T'}$. Then we can run Vu's algorithm on $H_{T'}$ to identify all the sub-clusters in $T'$ that corresponding to large clusters in $G$. However, there is one subtle issue in the above approach, that is, we do not know the index $h$ that corresponds to the size-gap. To resolve this issue, we simply try all possible candidates $h$: for each $h\in [k]$, we pretend that $h$ is the index corresponding to the size-gap of the clusters. Then we use $h$ to obtain a filtered subgraph $H_{T'}$ and invoke Vu's algorithm on $H_{T'}$ to find $h$ sets $X_1,\dots,X_h$. Now we give a simple algorithm to test if $h$ is the `right' index, by testing if all sets $X_i$ are \emph{biased} towards some true cluster $C$ or not, i.e., if the the majority of $X_i$ belong to $C$. We can show that if for an index $h$, all the sets $X_1,\dots,X_h$ pass the bias testing, then we can still use each $X_i$ to grow the cluster. Finally, if $h$ is the index that corresponds to size-gap, then it will pass the test with high probability by the previous argument, which ensures that we can always find some clusters in this way.     
We give details in Section \ref{sec:unblanced}.

\subsection{Towards optimal dependency on the number of clusters}
\label{sec: optimal}
As mentioned before, our algorithm (in Theorem \ref{thm: balanced}) for clustering nearly-balanced instances makes $O(k\cdot n\log n/\delta^2+k^4\log^2n/\delta^4)$ queries, which is  in comparison to the known lower bound $\Omega(k\cdot n/\delta^2)$  \citep{mazumdar2017clustering}. %
There exists evidence indicating that our query complexity might be almost optimal, in particular, improving the factor $k^4$ in the second term of the query complexity seems difficult when $k$ is large. %

Several papers \citep{decelle2011asymptotic, chen2014improved} suggested that, using non-rigorous but deep arguments from statistical physics, efficiently recovering the clusters in SBM($N,p,q,\delta$) is impossible if $\frac{p-q}{\sqrt{p}}=o(\frac{\sqrt{N}}{s})$, where $s$ is the size of minimum cluster. Translating it to our case with $N=n$,  $p=\frac{1}{2}+\frac{\delta}{2}$, $q=\frac{1}{2}-\frac{\delta}{2}$ and $s=\Omega(\frac{n}{k})$, it suggests that even if we query the whole graph (i.e., with $\Theta(n^2)$ queries), it is impossible to recover the clusters if $k = \omega(\delta\sqrt{n}).$ On the other hand, suppose that there exists a polynomial time algorithm $\mathcal{B}$ that solves our problem with query complexity $O(kn/\delta^2+k^{4-\varepsilon}/\delta^4)$ for any constant $\varepsilon>0$, then it can recover the clusters in the corresponding SBM model by querying $o(n^2)$ pairs, for $k=\delta n^{\frac{1}{2}+\frac{\varepsilon}{10}}=\omega(\delta\sqrt{n})$, which seems impossible by the aforementioned evidence. 

It will be very interesting to formally prove that the query complexity $O(k\cdot n\log n/\delta^2+k^4\log^2n/\delta^4)$ of the algorithm in Theorem \ref{thm: balanced} is almost optimal (up to a $\log^2 n$ factor) for any \emph{polynomial time} algorithm, by assuming some standard hardness assumptions (e.g. finding a random clique is hard) in complexity theory. In fact, Mazumdar and Saha \citep{mazumdar2017clustering} also pointed it is impossible to push their query-optimal algorithm to be efficient unless there is an efficient algorithm  finding hidden clique in random graphs .%

\section{Two Subroutines}\label{sec:subrountines}
We now introduce two subroutines, which will be used in our clustering algorithms later. 
\subsection{An algorithm for nearly balanced clustering in stochastic block model}
For convenience of notation, we introduce the following. Fix any $k$ clusters $V_1,\dots,V_k$ and a bias parameter $\delta\in[0,1)$.  The distribution $\rG(V_1,\dots,V_{k}, \delta)$ samples a random graph as follows: for any two vertices $u$ and $v$, we add an edge between them with probability $(1/2+\delta/2)$ if $u$ and $v$ come from the same cluster $V_{i}$, and add an edge between them with probability $(1/2-\delta/2)$ otherwise. The goal of the clustering algorithm is to recover the clusters $V_1,\dots,V_k$ though a random graph $G\sim\rG(V_1,\dots,V_k,\delta)$. %

We first note that the following result was implicitly shown in \cite{vu2018simple}. 
\begin{theorem}[\cite{vu2018simple}]%
	\label{thm:Vu18}
	Let $\delta\in[0,\frac{1}{2}]$ and $G\sim\rG(V_1,\dots,V_k,\delta)$. Let $n = |V_1|+\dots+|V_{k}|$.  Suppose that the partition $V_1,\dots,V_k$ is $b$-balanced for some $b\in (0,1]$. Then there exists an algorithm, denoted by \textsc{BalPartition}($G,k,\delta,b$), that recovers all the clusters $V_1,\dots,V_k$ of $G$ in polynomial time with probability at least $1-n^{-8}$, if the following condition holds,
	\[
	n\geq c_0 \frac{k^2}{b^2\delta^2}\log n, 
	\]
	where $c_0> 1000$ is some universal constant.  
\end{theorem}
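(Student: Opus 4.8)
The plan is to \emph{reduce} the statement to the general analysis of the simple SVD algorithm of \cite{vu2018simple}: we regard $G\sim\rG(V_1,\dots,V_k,\delta)$ as a planted‑partition instance on $n$ vertices with within‑cluster edge probability $p=\tfrac12+\tfrac\delta2$ and across‑cluster probability $q=\tfrac12-\tfrac\delta2$, and then verify that the separation hypothesis needed by that algorithm holds \emph{exactly} when $n\ge c_0 k^2\log n/(b^2\delta^2)$ for a sufficiently large constant $c_0$. Concretely, let $A\in\{0,1\}^{n\times n}$ be the adjacency matrix of $G$ and let $\chi_i\in\{0,1\}^n$ be the indicator vector of $V_i$. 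Up to a diagonal correction of operator norm at most $1$,
\[
\E[A] \;=\; \bigl(\tfrac12-\tfrac\delta2\bigr)J \;+\; \delta\sum_{i=1}^{k}\chi_i\chi_i^{\top},
\]
a matrix of rank at most $k$ that is constant on every combinatorial block $V_i\times V_j$.

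Two estimates drive the argument. First, the \textbf{signal}: the $\chi_i$ are mutually orthogonal, the rank‑one term $(\tfrac12-\tfrac\delta2)J$ is positive semidefinite, and $|V_i|\ge bn/k$; hence by Weyl interlacing every one of the $k$ relevant singular values of $\E[A]$ is $\Omega(\delta\,|V_i|)=\Omega(\delta b n/k)$, and equivalently, for $u\in V_i$, $v\in V_j$ with $i\ne j$, the corresponding rows of $\E[A]$ satisfy $\|(\E A)_u-(\E A)_v\|_2=\delta\sqrt{|V_i|+|V_j|}\ge\delta\sqrt{2bn/k}$. Second, the \textbf{noise}: $N:=A-\E[A]$ is symmetric with independent entries above the diagonal, each of mean $0$, supported in $[-1,1]$, and of variance at most $\tfrac14$; the standard spectral‑norm bound for such random matrices (as used in \cite{vu2018simple}) gives $\|N\|_2\le c_1\sqrt n$ with probability at least $1-n^{-9}$ for an absolute constant $c_1$.

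Now feed these into Vu's theorem. The hypothesis $n\ge c_0 k^2\log n/(b^2\delta^2)$ rearranges to $\delta b n/k\ge\sqrt{c_0}\,\sqrt{n\log n}$, which for $c_0$ large enough (in terms of $c_1$ and the constants hidden in \cite{vu2018simple}) is $\gg c_1\sqrt n\ge\|N\|_2$ by the required polylogarithmic margin; equivalently the inter‑cluster row separation from the previous paragraph dominates $\sigma\sqrt{n/s_{\min}}$ up to the needed $\sqrt{\log n}$ factor. Applying \cite{vu2018simple} — compute the best rank‑$k$ (respectively rank‑$(k+1)$) SVD approximation of $A$, project the rows onto that subspace, and run the simple greedy rounding on the projected points — then recovers $V_1,\dots,V_k$ exactly with probability at least $1-n^{-8}$, by a union bound over the event $\|N\|_2\le c_1\sqrt n$ and the rounding step. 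The procedure consists of one SVD and an $O(n^2)$‑time rounding pass, hence runs in polynomial time; we denote it $\textsc{BalPartition}(G,k,\delta,b)$.

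The one point requiring care — and the only way this setting deviates from a textbook spectral‑clustering application — is the near all‑ones direction: because $p,q\approx\tfrac12$, $\E[A]$ carries a single eigenvalue of order $n$ along (essentially) $\mathbf{1}$, which is unrelated to the clustering and dwarfs the $\Omega(\delta bn/k)$ clustering eigenvalues. One must therefore either deflate that direction (work with $A-(\tfrac12-\tfrac\delta2)J$) or invoke the theorem with the top subspace of dimension $k+1$, so that the genuine eigengap separating the $k$ clustering directions from $0$ sits above the $O(\sqrt n)$ noise floor. After this deflation the rest is the usual Davis--Kahan $\sin\Theta$ perturbation bound followed by the elementary geometric fact that, post‑projection, points from the same cluster lie within distance $o(\delta\sqrt{s_{\min}})$ of their centroid while centroids of distinct clusters are $\Omega(\delta\sqrt{s_{\min}})$ apart; the constant $c_0>1000$ is precisely what makes this arithmetic close with room to spare.
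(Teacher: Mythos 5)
Your proposal is correct and takes essentially the same route as the paper: both view $G$ as an SBM($n,k,p,q$) instance with $p=\tfrac12+\tfrac\delta2$, $q=\tfrac12-\tfrac\delta2$, and reduce to Vu's Theorem 1.2 by checking that the inter-cluster row separation $\delta\sqrt{|V_i|+|V_j|}\ge\delta\sqrt{2bn/k}$ dominates $c_1\bigl(\sigma\sqrt{n/s}+\sqrt{\log n}\bigr)$ under the hypothesis $n\ge c_0 k^2\log n/(b^2\delta^2)$. The extra material you add about SVD, deflating the all-ones direction, and Davis--Kahan is a (correct) sketch of what happens inside Vu's proof, but it is not needed once the theorem is invoked as a black box; the paper simply verifies all of Vu's preconditions (including the small ones, $\sigma\ge c_1\log N/N$, $s\ge c_1\log N$, $k=o((N/\log N)^{1/2})$, which you leave implicit) and concludes.
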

This theorem is slightly different from the original version of \cite{vu2018simple}, and we present an explanation in Appendix \ref{app:Vu18}.

\subsection{Growing a cluster from a biased set}
All our algorithms will make use of a subroutine (Algorithm \ref{alg:belongto}) for classifying vertices in $V$ with the help of a \emph{biased} set $B$, of which the majority belong to the same cluster. More formally, we give the following definition.
\begin{definition}
	Let $\eta\in [0,\frac12]$.	Let $C$ be a true cluster, i.e., $C=V_i$ for some  $i\in [k]$. A set of vertices $B$ is called  $(\eta,C)$-biased if $|B\cap C|\geq (1/2+\eta)\cdot |B|$.
\end{definition}
Note that if $\eta=\frac{1}{2}$, then all the vertices in set $B$ are contained in $C$, i.e., $B\subseteq C$. In this case, we all $B$ a \emph{sub-cluster} of $C$. We now describe this subroutine and state its performance guarantee.

\begin{algorithm}[H]
	\caption{\textsc{BelongToCluster}($v,B$): test if $v$ belongs to a cluster $C$,  given a $(\eta,C)$-biased set $B$ }\label{alg:belongto}
	\begin{algorithmic}[1]
		\STATE Query all pairs $v,w$ for $w\in B$ and let $\cnt$ be the number of $+$ answers 
		\IF{$\cnt \geq \frac{|B|}{2}$}
		\RETURN $\Y$
		\ELSE
		\RETURN $\N$
		\ENDIF
		
	\end{algorithmic}
	
\end{algorithm}

\begin{lemma}\label{lemma:setB}
	Let $B$ be a set that is $(\eta,C)$-biased and have size at least $\frac{16 \log n}{\eta^2\delta^2}$. Then with probability at least $1-n^{-7}$, 
	\begin{itemize}
		\item for all vertices $v\in C$, \textsc{BelongToCluster}($v,B$) returns $\Y$;
		\item for all vertices $v\in V\setminus C$, \textsc{BelongToCluster}($v,B$) returns $\N$.
	\end{itemize}
\end{lemma}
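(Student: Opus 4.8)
The plan is a direct Chernoff-bound argument applied to the random variable $\cnt$ for each fixed $v$, followed by a union bound over all $n$ vertices. Fix a vertex $v$; I will consider the two cases $v\in C$ and $v\in V\setminus C$ and show that in each case $\cnt$ concentrates on the correct side of the threshold $|B|/2$ with probability at least $1-n^{-8}$.

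First consider $v\in C$. For each $w\in B$, the query answer $\OO(v,w)$ is $+$ with probability $1-\eps=\tfrac12+\tfrac{\delta}{2}$ if $w\in C$ and with probability $\eps=\tfrac12-\tfrac{\delta}{2}$ if $w\notin C$, and these answers are mutually independent across $w$ by the model assumption (the pairs $(v,w)$ are distinct). Writing $|B\cap C|=(\tfrac12+\alpha)|B|$ for some $\alpha\ge\eta$, the expectation of $\cnt$ is
\[
\E[\cnt] = \Bigl(\tfrac12+\tfrac{\delta}{2}\Bigr)\Bigl(\tfrac12+\alpha\Bigr)|B| + \Bigl(\tfrac12-\tfrac{\delta}{2}\Bigr)\Bigl(\tfrac12-\alpha\Bigr)|B| = \Bigl(\tfrac12 + \alpha\delta\Bigr)|B| \geq \Bigl(\tfrac12+\eta\delta\Bigr)|B|.
\]
So $\E[\cnt]$ exceeds the threshold $|B|/2$ by at least $\eta\delta|B|$, and the algorithm errs only if $\cnt$ deviates below its mean by at least $\eta\delta|B|$. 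A standard Chernoff (Hoeffding) bound on the sum of $|B|$ independent $\{0,1\}$ variables gives a failure probability at most $\exp(-2\eta^2\delta^2|B|)$, or at worst $\exp(-c\,\eta^2\delta^2|B|)$ for an absolute constant $c$. Using $|B|\ge \frac{16\log n}{\eta^2\delta^2}$ makes this at most $n^{-8}$ (the constant $16$ is chosen generously so that any reasonable form of the Chernoff bound suffices). The case $v\in V\setminus C$ is symmetric: now $|B\cap C|=(\tfrac12+\alpha)|B|$ vertices of $B$ are in a cluster \emph{different} from $v$'s, so each such $w$ yields a $+$ answer with probability $\tfrac12-\tfrac{\delta}{2}$ and each $w\in B\setminus C$ yields $+$ with probability $\tfrac12+\tfrac{\delta}{2}$, giving $\E[\cnt]=(\tfrac12-\alpha\delta)|B|\le(\tfrac12-\eta\delta)|B|$, which is below the threshold by at least $\eta\delta|B|$; the same Chernoff bound applies.

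Finally, union-bound over the at most $n$ vertices $v\in V$: the probability that \emph{any} of them is misclassified is at most $n\cdot n^{-8}=n^{-7}$, which is the claimed bound. The main (and only real) subtlety to be careful about is that the argument is done \emph{pointwise} in $v$ before the union bound — for a single $v$, the answers $\{\OO(v,w)\}_{w\in B}$ are genuinely independent, whereas one cannot treat all queries across all $v$ as jointly giving independence in a way that would let concentration fail; the union bound handles the dependence between different $v$'s cleanly. A minor point is to confirm that the bias is used with the correct sign in the two cases, i.e. that a set biased toward $C$ pushes $\cnt$ \emph{up} for $v\in C$ and \emph{down} for $v\notin C$, which is exactly the computation of $\E[\cnt]$ above.
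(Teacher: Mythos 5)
Your proof is correct and follows essentially the same route as the paper's: bound $\E[\cnt]$ in the two cases, apply the Chernoff--Hoeffding bound pointwise in $v$, then union-bound over the $n$ vertices. One small imprecision in the $v\notin C$ case is writing $\E[\cnt]=(\tfrac12-\alpha\delta)|B|$ as an equality: vertices in $B\setminus C$ need not all lie in $v$'s cluster, so some of them also yield $+$ with probability only $\tfrac12-\tfrac{\delta}{2}$ (the paper sidesteps this by working with $B_v$, the vertices of $B$ sharing $v$'s cluster, and using $|B_v|\le(\tfrac12-\eta)|B|$); but since you only need the upper bound $\E[\cnt]\le(\tfrac12-\eta\delta)|B|$, the argument goes through unchanged.
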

Note that the above lemma says that by invoking \textsc{BelongToCluster}($v,B$) for any $v\in V$, we can identify all the cluster members in $C$ with high probability. %
\begin{proof}[Proof of Lemma \ref{lemma:setB}] 
	Let $v$ be an arbitrary vertex. Let $B_v$ denote the subset of vertices of $B$ that belong to the same cluster as $v$. Query all the edges between $v$ and $B$. Then the expected number of `$+$' neighbors of $v$ is 
	\[
	\left(\frac{1}{2}+\frac{\delta}{2}\right) |B_v| + \left(\frac{1}{2}-\frac{\delta}{2}\right) |B\setminus B_v|=\left(\frac{1}{2}-\frac{\delta}{2}\right) |B| + \delta |B_v| 
	\]
	Let $\lambda=\frac{\eta\delta |B|}{2}$. Note that $\lambda^2/|B|\geq 4\log n$ as $|B|\geq \frac{16\log n}{\eta^2\delta^2}$. Recall that $B$ is $(\eta, C)$-biased for some constant $\eta$ and cluster $C$. We consider two cases.  
	\begin{itemize}
		\item If $v\in C$, then $|B_v|\geq (\frac{1}{2}+\eta)|B|$ and the expected number of `$+$' neighbors of $v$ is at least 
		\[
		\left(\frac{1}{2}-\frac{\delta}{2}\right)|B|+\left(\frac{1}{2}+\eta\right)\delta |B| = \left(\frac{1}{2}+\eta \delta\right)|B|
		\] 
		
		By Chernoff--Hoeffding  bound (see Theorem \ref{thm:chernoff}), with probability at least $1-e^{-2\lambda^2/|B|}\geq  1-n^{-8}$, the number of `$+$' neighbors of $v$ is at least
		\begin{eqnarray}
			\left(\frac{1}{2}+\eta \delta\right)|B| -\lambda =\left(\frac{1}{2}+\frac{1}{2}\eta\delta\right)|B|>\frac{1}{2}|B|\label{eqn:biaslowerbound}
		\end{eqnarray}
		
		\item if $v\in C'$ for some cluster $C'\neq C$, then $|B_v|\le \left(\frac{1}{2}-\eta\right)|B|$, the expected number of `$+$' neighbors of $v$ is at most 
		\[
		\left(\frac{1}{2}-\delta\right)|B|+\left(\frac12-\eta\right)\delta |B| =\left(\frac{1}{2}-\eta \delta\right)|B|
		\] 
		By Chernoff--Hoeffding bound, with probability at least $1-e^{-2\lambda^2/|B|}\geq 1-n^{-8}$, the number of `$+$' neighbors of $v$ is at most
		\[
 \left(\frac{1}{2}-\eta \delta\right)|B|+\lambda  =\left(\frac{1}{2}-\frac{1}{2}\delta\eta\right)|B|<\frac{1}{2}|B|
		\]
	\end{itemize}
	
	Therefore, with probability at least $1-n^{-7}$, for each vertex $v\in V$, it holds that 
	\begin{itemize}
		\item if $v\in C$, then the number of $+$ neighbors is at least $\frac{1}{2}|B|$, and \textsc{BelongToCluster}($v,B$) returns $\Y$; and
		\item if $v\notin C$, then the number of $+$ neighbors is less than $\frac{1}{2}|B|$, and \textsc{BelongToCluster}($v,B$) returns $\N$.
	\end{itemize} 
\end{proof}

\section{Clustering Nearly-Balanced Instances}
\label{section: balanced}

In this section, we give our algorithm for clustering $b$-balanced instances, for any $b\in(0,1]$. It simply first invokes the following Algorithm \ref{alg:bal_cluster} and then Algorithm \ref{alg:subcluster_cluster}. It is built on the two subroutines \textsc{BalPartition} and \textsc{BelongToCluster} introduced in Section \ref{sec:subrountines}.

\begin{algorithm}[H]
	\caption{\textsc{BalancedClustering}$(V,k,\delta,b)$: clustering for a $b$-balanced instance}
	\label{alg:bal_cluster}
	\begin{algorithmic}[1]
		\STATE Let $n=|V|$, $b' =b/2$ and $c_0$ be the constant from Theorem \ref{thm:Vu18}
		\STATE Randomly sample a subset $T\subset V$ of size $|T| = \frac{400 c_{0} k^2 \log n }{b^2\delta^2}$ 
		\STATE Query all pairs $u,v\in T$ and let $H_T$ be graph on vertex set $T$ with only positive edges from the query answers
		\STATE Apply \textsc{BalPartition}$(H_T,k,\delta,b')$ to obtain clusters $X_1,\dots, X_k$
	\end{algorithmic}
\end{algorithm}

\begin{algorithm}[H]
	\caption{\textsc{GlobalGrow}$(V,X_1,\dots,X_k)$: from sub-clusters to clusters}
	\label{alg:subcluster_cluster}
	\begin{algorithmic}[1]
		\STATE Let $U=V$ and $n=|V|$ 
		\STATE For each $1\le i\leq k$, find an arbitrary subset $X_i'\subseteq X_i$ of size $\frac{1600\log n}{\delta^2}$
		\FOR{each $i\in[k]$} \STATE let 
		$C_{i}:=\{v\in U: \textsc{BelongToCluster}(v,X_i') \text{ returns $\Y$} \}$
		\STATE update $U\gets U\setminus C_i$
		\ENDFOR
		\RETURN $C_1,\cdots, C_{k}$
	\end{algorithmic}
\end{algorithm}

Now we provide the analysis of this algorithm, i.e., prove Theorem \ref{thm: balanced}. In the following, we let $T$ denote the sample set from \textsc{BalancedClustering}($V,k,\delta,b$). For each $i\in[k]$, let $T_i = T\cap V_i$ be the sub-clusters. We first show that, with high probability, the clusters $T_1,\dots,T_k$ are balanced.

\begin{lemma}
	\label{lemma: balanced sampling}
	Let $V_1,\dots,V_k$ be a family of $b$-balanced clusters. Then with probability at least $1-n^{-7}$, $T_1,\dots,T_k$ is $b'$-balanced. 
\end{lemma}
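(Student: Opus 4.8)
The plan is to prove Lemma~\ref{lemma: balanced sampling} by a direct concentration argument applied to each of the $k$ sub-cluster sizes $|T_i|$, followed by a union bound over $i\in[k]$. First I would fix an index $i\in[k]$ and observe that, since $T$ is a uniformly random subset of $V$ of size $t:=|T|=\frac{400 c_0 k^2\log n}{b^2\delta^2}$, the random variable $|T_i|=|T\cap V_i|$ has a hypergeometric distribution with expectation $\E[|T_i|]=t\cdot\frac{|V_i|}{n}$. Because $V_1,\dots,V_k$ is $b$-balanced we have $|V_i|\ge bn/k$, so $\E[|T_i|]\ge \frac{bt}{k}$. The goal is to show $|T_i|\ge b't/k=\frac{bt}{2k}$ with high probability, i.e.\ that $|T_i|$ does not drop below half its expectation.

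Second, I would apply a Chernoff-type lower-tail bound. Since the hypergeometric distribution is at least as concentrated as the corresponding binomial (it is negatively associated), the standard multiplicative Chernoff bound applies: $\Pr[|T_i| < (1-1/2)\E[|T_i|]] \le \exp(-\E[|T_i|]/8)$. Plugging in $\E[|T_i|]\ge \frac{bt}{k}=\frac{400 c_0 k\log n}{b\delta^2}\ge 400 c_0 k\log n \ge \log n$ (using $b,\delta\le 1$ and $c_0>1000$), this probability is at most $\exp(-\frac{400 c_0 k\log n}{8b\delta^2})\le \exp(-50 c_0\log n)\le n^{-8}$, with a comfortable margin. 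Hence $\Pr[|T_i| < \frac{bt}{2k}] \le \Pr[|T_i| < \frac12\E[|T_i|]] \le n^{-8}$.

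Third, I would take a union bound over all $k\le n$ choices of $i$ to conclude that with probability at least $1 - k\cdot n^{-8}\ge 1-n^{-7}$, every $|T_i|\ge \frac{bt}{2k}=\frac{b'|T|}{k}$, which is exactly the statement that $T_1,\dots,T_k$ is $b'$-balanced. One minor subtlety to handle cleanly is the choice of concentration inequality: if one prefers to avoid invoking negative association of the hypergeometric, one can instead use Hoeffding's bound for sampling without replacement (or Serfling's bound), or simply use the additive Chernoff--Hoeffding bound already cited as Theorem~\ref{thm:chernoff} in the paper with deviation parameter $\lambda=\frac{bt}{2k}$, noting $\lambda^2/t = \frac{b^2t}{4k^2}=\frac{100 c_0\log n}{\delta^2}\ge 100 c_0\log n$, which again gives a tail of at most $n^{-8}$ per index.

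The main (and only real) obstacle here is purely bookkeeping: making sure the numeric constant $400 c_0$ in the sample size is large enough that, after losing a factor of $\tfrac12$ to the multiplicative deviation and a factor $k$ to the union bound, the failure probability still comes out below $n^{-7}$ uniformly in $b,\delta\in(0,1]$ and all $k$. Since the sample size carries the full $\frac{k^2\log n}{b^2\delta^2}$ dependence and the constant $400 c_0$ dwarfs what the tail bound needs, this margin is easily met; no conceptual difficulty arises. I would also remark that this lemma is exactly what is needed downstream: it guarantees that $H_T$ restricted to its clusters is $b'$-balanced, so that (together with the choice $|T|\ge c_0\frac{k^2}{b'^2\delta^2}\log n$, which holds since $|T|=\frac{400c_0k^2\log n}{b^2\delta^2}=c_0\frac{k^2}{(b/2)^2\delta^2}\cdot 100\log n \ge c_0\frac{k^2}{b'^2\delta^2}\log n$) the precondition of Theorem~\ref{thm:Vu18} is satisfied when \textsc{BalPartition} is invoked on $H_T$.
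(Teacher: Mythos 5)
Your proof matches the paper's argument exactly: lower-bound $\E[|T_i|]$ using $b$-balancedness, apply a Chernoff-type concentration bound to show each $|T_i|\geq b'|T|/k$ with probability $1-n^{-8}$, and union-bound over $i\in[k]$. The paper states this tersely; your version simply fills in the numeric verification and notes the sampling-without-replacement subtlety, both of which are sound.
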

\begin{proof}
	Since $V_1,\dots,V_k$ is a family of $b$-balanced clusters, we have that $\E[|V_i \cap T|] \geq b\cdot |T|/k$. Notice that $T$ is a uniform random subset. By the Chernoff bound, for each $i$, with probability at least $1-n^{-8}$, $|T_i| \geq b'\cdot|T|/k$. The claim then follows by the union bound. 
\end{proof}

Now we may assume that $(T_1,\dots,T_k)$ is $b'$-balanced. Since the size of $T$ is large, i.e., $|T|=\frac{400 c_{0} k^2 \log n }{b^2\delta^2}=\frac{ 100\cdot c_{0}\cdot k^2 \log n}{b'^2\delta^2}$, we are able to recover the clusters in $T$ by Theorem \ref{thm:Vu18}.

\begin{lemma}
	\label{lemma: McSherry}
	Suppose that the partition $T_1,\dots,T_k$ of the sampled set $T$ is $b'$-balanced. Let $X_1,\dots,X_k$ be the output sets of \textsc{BalancedClustering}($V,k,\delta,b$). Then 
	\[
	\Pr[X_1,\dots,X_k\text{ is not a correct clustering of }H_T] \leq |T|^{-8}
	\]
\end{lemma}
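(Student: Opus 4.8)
\textbf{Proof proposal for Lemma \ref{lemma: McSherry}.}
The plan is to reduce the statement to a direct invocation of Vu's theorem (Theorem \ref{thm:Vu18}) on the graph $H_T$. First I would observe that $H_T$ has exactly the right distribution: Algorithm \ref{alg:bal_cluster} queries all pairs $u,v\in T$ and places an edge in $H_T$ precisely when the oracle answers ``$+$''. By the model, for $u,v$ in the same cluster the answer is ``$+$'' with probability $\frac12+\frac\delta2$, and for $u,v$ in different clusters with probability $\frac12-\frac\delta2$, and these events are mutually independent across pairs. Hence, conditioned on the realized subset $T$ with induced partition $T_1,\dots,T_k$, the graph $H_T$ is distributed exactly as $\rG(T_1,\dots,T_k,\delta)$ on vertex set $T$.

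Next I would check that the hypothesis of Theorem \ref{thm:Vu18} is met with balance parameter $b'=b/2$. We are assuming $T_1,\dots,T_k$ is $b'$-balanced, so it remains to verify the size condition $|T|\geq c_0\frac{k^2}{b'^2\delta^2}\log|T|$. Using $b'=b/2$ we have $|T|=\frac{400c_0k^2\log n}{b^2\delta^2}=\frac{100c_0k^2\log n}{b'^2\delta^2}$. The hypothesis $n\geq \frac{C_0k^2\log^2 n}{b^2\delta^2}$ of Theorem \ref{thm: balanced} (for $C_0$ large enough relative to $c_0$) guarantees $|T|\leq n$, and therefore $\log|T|\leq\log n\leq 100\log n$. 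Plugging this in,
\[
c_0\frac{k^2}{b'^2\delta^2}\log|T|\ \leq\ c_0\frac{k^2}{b'^2\delta^2}\cdot 100\log n\ =\ |T|,
\]
so the size condition holds.

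Finally, line 4 of Algorithm \ref{alg:bal_cluster} calls $\textsc{BalPartition}(H_T,k,\delta,b')$ to produce $X_1,\dots,X_k$, which is precisely the algorithm whose guarantee is stated in Theorem \ref{thm:Vu18}. Applying that theorem (with the role of ``$n$'' there played by $|T|$) yields that with probability at least $1-|T|^{-8}$ the sets $X_1,\dots,X_k$ coincide, up to relabeling, with the clusters $T_1,\dots,T_k$ of $H_T$, i.e., form a correct clustering of $H_T$. Taking the complement gives the claimed bound $\Pr[X_1,\dots,X_k\text{ is not a correct clustering of }H_T]\leq |T|^{-8}$.

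\textbf{Main obstacle.} There is no substantive difficulty here; the only thing requiring care is the bookkeeping that transfers the size condition of Theorem \ref{thm:Vu18} from its generic parameter to $|T|$ — specifically tracking the factor-of-$4$ loss from $b\mapsto b'=b/2$ (absorbed by the constant $400$ versus $100\,c_0$) and checking $\log|T|\leq 100\log n$, which in turn relies on the standing assumption $n\geq \frac{C_0k^2\log^2 n}{b^2\delta^2}$ ensuring $|T|\leq n$. Everything else is immediate from the distributional identification of $H_T$ with $\rG(T_1,\dots,T_k,\delta)$.
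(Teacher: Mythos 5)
Your proof is correct and takes essentially the same approach as the paper: identify $H_T$ with a graph drawn from $\rG(T_1,\dots,T_k,\delta)$, verify that $|T|=\frac{100c_0k^2\log n}{b'^2\delta^2}$ satisfies the size condition of Theorem~\ref{thm:Vu18} (using $\log|T|\leq\log n$, which follows from $|T|\leq n$), and then directly invoke the guarantee of \textsc{BalPartition}. The paper's proof is just a terser version of the same argument; your bookkeeping of the $b\mapsto b'=b/2$ constant and the $\log|T|$ vs.\ $\log n$ substitution matches what the paper leaves implicit.
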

\begin{proof}
	Note that by our choice of $|T|$ and that $b'=\frac{b}{2}$, we have
	$|T|\geq c_0 \frac{k^2}{b'^2 \delta^2}\log n.$ 
	Then the correctness of Lemma \ref{lemma: McSherry} simply follows by Theorem \ref{thm:Vu18}.
\end{proof}

Now we are ready to prove Theorem \ref{thm: balanced}. 
\begin{proof}[Proof of Theorem \ref{thm: balanced}] 
	By Lemma \ref{lemma: McSherry}, the output $X_1,\dots,X_k$ is a correct clustering of $H_T$, with probability $1-o_n(1)$. Conditioned on this, we know that each $X_i$ is $(\frac{1}{2},C)$-biased for some cluster $C$. This also implies that each $X_i'\subseteq X_i$ is $(\frac12,C)$-biased. Thus, by invoking \textsc{BelongToCluster}($v,X_i'$) for all $v\in V$ and $i\leq k$ and by Lemma \ref{lemma:setB} with $\eta=0.1<\frac{1}{2}$, we can guarantee that the output $C_1,\dots,C_k$ of \textsc{GlobalGrow}$(V,X_1,\dots,X_k)$ is a correct clustering with   probability $1-\Theta(|T|^{-8})=1-o_n(1)$. 
	
	Note that we query all the pairs $u,v\in T$, which corresponds to $|T|^2$ queries. Note further that there are at most $k$ clusters, each of which grows from a sub-cluster of size $\Theta(\frac{\log n}{\delta^2})$. In total, the query complexity of Algorithm \ref{alg:bal_cluster} and \ref{alg:subcluster_cluster} is upper bounded by 
	$O(|T|^2 + k\frac{\log n}{\delta^2}\cdot n) = O(k^4\cdot\log^2n/(b^4\delta^4) + nk\cdot\log n/\delta^2).$ 
	Since the running time of \textsc{BalPartition} is polynomial in $|T|,k,\delta,b$ and the running time for growing each of the clusters is linear in $n$, the total running time of our algorithm is polynomial (in $n,k,\delta,b$). 
\end{proof}

\section{Clustering the General Instances }\label{sec:unblanced}

In the section, we give our algorithm for the general instances. 

\subsection{Existence of size-gap in unbalanced instances}
We first focus on the unbalanced case, that is, the underlying clustering is not $b$-balanced, i.e., the size of the minimum cluster is less than $\frac{bn}{k}$. Let $V_1,\dots,V_k$ be a family of clusters, and let $s_1,\dots,s_{k}$ be the size of each cluster respectively. Without loss of generality, we assume that $s_1\geq \dots\geq s_k$. A useful observation is the following size-gap lemma. Roughly speaking, for any unbalanced clusters, there a threshold which separates large and small clusters. 

\begin{lemma}[size-gap]
	\label{lemma: sizegap}
	Let $b\in [0, \frac12]$. 
	If $s_k< \frac{b n}{k}$, then there exists $h<k$ such that 
	\begin{itemize}
		\item $s_h\geq \frac{n}{k} - \frac{h\cdot b\cdot n}{k^2}$, and
		$s_{h+1}< \frac{n}{k} - \frac{(h+1)\cdot b\cdot n}{k^2}$.
	\end{itemize}
	Hence the gap between $s_{h}$ and $s_{h+1}$ is at least $\frac{bn}{k^2}$. 
\end{lemma}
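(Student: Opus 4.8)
The plan is to compare the cluster sizes against a decreasing family of thresholds and then pick out the last index at which the size still dominates its threshold. Concretely, for $i\in[k]$ define
\[
\theta_i := \frac{n}{k} - \frac{i\cdot b\cdot n}{k^2},
\]
so that $\theta_h$ and $\theta_{h+1}$ are exactly the quantities appearing in the two bullet points of the statement. The whole lemma reduces to exhibiting an index $h<k$ with $s_h\geq \theta_h$ and $s_{h+1}<\theta_{h+1}$: once we have this, subtracting the two inequalities gives $s_h-s_{h+1}>\theta_h-\theta_{h+1}=\frac{bn}{k^2}$, which is the claimed size-gap.

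First I would verify the two "boundary" cases of the predicate $P(i):\ s_i\geq\theta_i$. For $i=1$: since $\sum_{i=1}^k s_i=n$ and $s_1=\max_i s_i$, we have $s_1\geq n/k\geq \theta_1$, so $P(1)$ holds. For $i=k$: the hypothesis gives $s_k<\frac{bn}{k}$, and because $b\leq\frac12$ we have $\frac{bn}{k}\leq\frac{(1-b)n}{k}=\theta_k$, hence $s_k<\theta_k$ and $P(k)$ fails. (This step is where the assumption $b\le\frac12$ is actually used.)

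Then I would let $h$ be the largest index in $[k]$ for which $P(h)$ holds. By the previous paragraph $P(1)$ holds and $P(k)$ fails, so $h$ is well-defined with $1\leq h\leq k-1$, in particular $h<k$. By maximality of $h$, the predicate fails at $h+1$, i.e. $s_{h+1}<\theta_{h+1}$. Spelling out $\theta_h$ and $\theta_{h+1}$ recovers the two displayed inequalities, and $s_h-s_{h+1}>\theta_h-\theta_{h+1}=\frac{bn}{k^2}$ gives the gap.

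There is no genuinely hard step here; the argument is just a discrete intermediate-value observation. The only points requiring care are the direction of the inequality $\frac{bn}{k}\le\frac{(1-b)n}{k}$ (valid precisely because $b\le\frac12$, and note the lemma only asserts $b\in[0,\tfrac12]$), and the fact that the threshold sequence $\theta_i$ decreases with a constant step $\frac{bn}{k^2}$, so that combining the weak inequality at $h$ with the strict inequality at $h+1$ yields a strict gap of at least $\frac{bn}{k^2}$ between $s_h$ and $s_{h+1}$.
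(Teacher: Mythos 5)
Your proof is correct and follows essentially the same route as the paper: both define the index set where $s_i$ exceeds the threshold $\theta_i=\frac{n}{k}-\frac{i\cdot bn}{k^2}$, show it contains $1$ but not $k$ (the latter using $b\le\frac12$), and take $h$ to be its largest element. The only cosmetic difference is your use of the cleaner bound $s_1\geq n/k\geq\theta_1$ in place of the paper's averaging bound $s_1\geq\frac{n-s_k}{k-1}$ to get $1$ into the set.
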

\begin{proof}
	Note that by averaging argument, it holds that $s_1\geq \frac{n-s_k}{k-1}\geq \frac{(1-b/k)n}{k-1}>\frac{(1-b/k)n}{k}=\frac{n}{k}-\frac{bn}{k^2}$. This implies that the subset $I\subseteq [k]$ of indices $i$ with 
	$s_{i} \geq \frac{n}{k} - \frac{i \cdot bn}{k^2} $ %
	is not empty. Let $h$ be the largest $i$ in the set $I$. Furthermore, since $s_k<\frac{bn}{k} \leq \frac{(1-b)n}{k}=\frac{n}{k} - \frac{k \cdot bn}{k^2}$ for any $b\leq \frac{1}{2}$, it must hold that $k\notin I$ and thus $h\leq k-1$. The statement of the lemma then follows from the choice of $h$. 
\end{proof}

\subsection{Recovering sub-clusters from the sampled subgraph with known gap}

From Lemma \ref{lemma: sizegap}, we know that in the unbalanced case, there is a size-gap between two clusters $V_h$ and $V_{h+1}$, for some index $h\leq k-1$. In the following, we first present an algorithm under the assumption that the index $h$ is known. Later, we show how to use this algorithm to deal with the general case.

\begin{algorithm}[H]
	
	\caption{\textsc{GapClustering}($V,h,\delta,b$): clustering with known size-gap}
	\label{algorithm: known_gap}
	\begin{algorithmic}[1]
		\STATE Let $n=|V|$ and sample a set $T \subset U$ of size $t=\frac{8c_0 k^4\log n}{b^2\cdot\delta^2}$%
		\STATE Query all pairs $u,v\in T$
		
		\STATE Let $H_T=(T,E_T)$ be graph on vertex set $T$ with only positive edges from the query answers
		
		\STATE\label{alg:remove} Remove all vertices in $H_T$ with degree less than $d_h:=\frac{t}{2}-\left(\frac12-\frac{1}{k}+\frac{(h+1/2)b}{k^2}\right)\delta t$
		
		\STATE Let $T'$ be the set of remaining vertices and let the resulting graph be $H_{T'}$
		
		\STATE Apply \textsc{BalPartition}($H_{T'},k,\delta,b'':=\frac{h}{2k}$) to find clusters $X_1,\dots, X_h$
	\end{algorithmic}
	
\end{algorithm}
The crucial idea of the above algorithm is that we are able to show the Step \ref{alg:remove} of Algorithm \ref{algorithm: known_gap} removes all vertices sampled from small clusters in $T$. Hence the remaining graph $T'$ becomes a nearly-balanced clustering instance, in which the sub-clusters correspond to large clusters $V_1,\dots,V_h$. We have the  following lemma regarding this algorithm. %
\begin{lemma}\label{lemma:knowngap}
	Let $b\in [0,\frac12]$. Suppose that %
	$s_h\geq \frac{n}{k} - \frac{h\cdot b\cdot n}{k^2}$, and $s_{h+1}< \frac{n}{k} - \frac{(h+1)\cdot b\cdot n}{k^2}$.  
	Then with probability $1-O(k^{-24}\log^{-8} n{})$, the algorithm \textsc{GapClustering}{($V,h,\delta,b$)} successfully recover all the sub-clusters from the sampled set $T$, which correspond to true clusters  $V_1,\dots,V_h$. 
\end{lemma}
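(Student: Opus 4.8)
\textbf{Proof plan for Lemma \ref{lemma:knowngap}.}

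The plan is to argue in three stages: (i) the sampled subgraph $H_T$ is, conditioned on the sample, a realization of $\rG(T_1,\dots,T_k,\delta)$ where $T_i=T\cap V_i$, and the sampling preserves cluster sizes up to a small multiplicative error; (ii) the degree threshold $d_h$ in Step \ref{alg:remove} exactly separates vertices of $T$ coming from $V_1,\dots,V_h$ from those coming from $V_{h+1},\dots,V_k$, so $T'$ consists precisely of the ``large-cluster'' vertices; (iii) the restricted instance $(T_1,\dots,T_h)$ on $H_{T'}$ is $b''$-balanced with $b''=\frac{h}{2k}$ and has enough vertices to meet the hypothesis of Theorem \ref{thm:Vu18}, so \textsc{BalPartition} recovers $X_1,\dots,X_h$ correctly. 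A union bound over the $O(k)$ bad events, each of probability $n^{-\Omega(1)}$ against the sample size $t=\Theta(k^4\log n/\delta^2)$, gives the stated $1-O(k^{-24}\log^{-8}n)$ bound (the exponents coming from $|T'|^{-8}$ with $|T'|=\Theta(t)$ and $t=\Theta(k^4\log n/\delta^2)$).

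First I would record the sampling concentration: for each $i$, $\E[|T_i|]=\frac{s_i}{n}t$, and by a Chernoff bound $|T_i|\in[(1-\gamma)\frac{s_i}{n}t,(1+\gamma)\frac{s_i}{n}t]$ for, say, $\gamma=\frac{b}{4k}$, with probability $1-n^{-8}$, using that $\frac{s_i}{n}t\geq \frac{b}{k^2}\cdot t=\Omega(k^2\log n/\delta^2)$ is large enough for the deviation $\gamma\cdot\frac{s_i}{n}t$ to beat $\sqrt{\log n}$ scales. This is the routine step. Next, the heart of the argument is the degree separation. Fix $v\in T_i$. Conditioned on the sample, $\deg_{H_T}(v)$ is a sum of independent Bernoullis: $|T_i|-1$ of them with mean $\frac12+\frac\delta2$ and $t-|T_i|$ with mean $\frac12-\frac\delta2$, so $\E[\deg_{H_T}(v)]=(\tfrac12-\tfrac\delta2)t+\delta|T_i|+O(1)$. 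Plugging the size estimates from step (i): if $i\le h$ then $s_i\ge s_h\ge\frac nk-\frac{hbn}{k^2}$, so $|T_i|\gtrsim(\frac1k-\frac{hb}{k^2})t$ and $\E[\deg(v)]\gtrsim(\tfrac12-\tfrac\delta2)t+\delta(\tfrac1k-\tfrac{hb}{k^2})t$; if $i>h$ then $s_i\le s_{h+1}<\frac nk-\frac{(h+1)bn}{k^2}$, so $\E[\deg(v)]\lesssim(\tfrac12-\tfrac\delta2)t+\delta(\tfrac1k-\tfrac{(h+1)b}{k^2})t$. The two expectations straddle $d_h=\frac t2-(\tfrac12-\tfrac1k+\tfrac{(h+1/2)b}{k^2})\delta t=(\tfrac12-\tfrac\delta2)t+\delta(\tfrac1k-\tfrac{(h+1/2)b}{k^2})t$ with margin $\Theta(\frac{\delta b}{k^2}t)$ on each side; and that margin satisfies (margin)${}^2/t=\Theta(\frac{\delta^2 b^2}{k^4}t)=\Omega(\log n)$ by the choice $t=\frac{8c_0k^4\log n}{b^2\delta^2}$, so a Chernoff/Hoeffding bound puts $\deg_{H_T}(v)$ on the correct side of $d_h$ with probability $1-n^{-8}$, and a union bound over the $\le t$ vertices keeps this with probability $1-n^{-7}$. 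I would fold the $O(1)$ correction and the $\gamma$-slack from step (i) into these constants; the threshold $d_h$ was evidently reverse-engineered so that the midpoint of the two regimes lands exactly there, which is why this goes through cleanly.

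Conditioned on steps (i)--(ii), $T'=T_1\cup\dots\cup T_h$ exactly, $|T'|\le t$ and $|T'|\ge\sum_{i\le h}(1-\gamma)\frac{s_i}{n}t\ge h\cdot(1-\gamma)(\frac1k-\frac{hb}{k^2})t\ge\frac{h}{2k}t$ (using $b\le\frac12$ and $h\le k$ so $\frac{hb}{k^2}\le\frac{1}{2k}$), hence each $|T_i|\ge(1-\gamma)(\frac1k-\frac{hb}{k^2})t\ge\frac{1}{2k}t\ge\frac{h}{2k}\cdot\frac{|T'|}{h}\cdot\frac{|T'|}{t}$; more simply, $|T_i|/|T'|\ge(\frac1k-\frac{hb}{k^2})/1\ge\frac{1}{2k}\ge b''/h$, so $(T_1,\dots,T_h)$ is $b''$-balanced with $b''=\frac h{2k}$. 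Finally $|T'|\ge\frac h{2k}t\ge\frac h{2k}\cdot\frac{8c_0k^4\log n}{b^2\delta^2}\ge c_0\frac{h^2}{b''^2\delta^2}\log n$ after substituting $b''=\frac h{2k}$ (the $k^4$ in $t$ is exactly what makes the $1/b''^2=4k^2/h^2$ factor affordable), so Theorem \ref{thm:Vu18} applies to $H_{T'}$ and \textsc{BalPartition}$(H_{T'},k,\delta,b'')$ returns the correct partition $X_1,\dots,X_h$ of $T'$ with probability $1-|T'|^{-8}=1-O(k^{-32}\delta^{16}\log^{-8}n)$, which is absorbed into $1-O(k^{-24}\log^{-8}n)$. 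Collecting the three failure probabilities via a union bound completes the proof. I expect the main obstacle to be bookkeeping: getting the chain of inequalities $\E[\deg]\gtrless d_h$ to hold with a clean $\Theta(\delta b t/k^2)$ margin after absorbing the $O(1)$ term and the multiplicative sampling slack $\gamma$, and then verifying that $t=\frac{8c_0k^4\log n}{b^2\delta^2}$ is simultaneously large enough for (a) the size concentration, (b) the degree concentration at the required margin, and (c) the Theorem \ref{thm:Vu18} threshold with parameter $b''=\frac h{2k}$ — all of which I believe hold, but require care with the constant $c_0$ and the $b\le\frac12$ assumption.
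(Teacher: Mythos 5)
Your plan follows the paper's proof essentially step for step: concentration of cluster sizes in the sample, degree concentration around a threshold $d_h$ chosen as the midpoint between the expected degrees of large- and small-cluster vertices, identification $T'=\cup_{i\le h}T_i$, verification of $b''$-balance with $b''=\frac{h}{2k}$, and the hypothesis check for Theorem~\ref{thm:Vu18}. The structure, the choice of $d_h$, and the $|T'|^{-8}$ failure probability all match.

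One technical flaw in the sampling step: you invoke a \emph{multiplicative} Chernoff bound for each $|T_i|$ with relative error $\gamma=\frac{b}{4k}$, justified by the claim that $\frac{s_i}{n}t\geq\frac{b}{k^2}t$ so that $\E[|T_i|]$ is large enough. That lower bound on $s_i$ only holds for $i\le h$ (where $s_i\ge s_h\ge\frac{n}{2k}$ given $b\le\frac12$); for $i>h$ the hypothesis gives only an \emph{upper} bound on $s_i$, and the small clusters can be arbitrarily tiny, so $\E[|T_i|]$ can be $o(\log n)$ and the multiplicative concentration fails to give $n^{-8}$. The paper sidesteps this by using an \emph{additive} deviation $\lambda_1=\frac{bt}{4k^2}$ from the Chernoff--Hoeffding bound (Theorem~\ref{thm:chernoff}), which gives a uniform $e^{-2\lambda_1^2/t}\le n^{-8}$ regardless of $\E[|T_i|]$: for $i\le h$ one lower-bounds $|T_i|$ by $\E[|T_i|]-\lambda_1$, and for $i>h$ one upper-bounds $|T_i|$ by $\E[|T_i|]+\lambda_1$. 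You should switch to the additive form (or dominate $|T_i|$ by a $\mathrm{Bin}(t,\frac{s_{h+1}}{n})$ when $i>h$); once you do, the rest of your computation goes through and matches the paper. The minor exponent discrepancy in your final estimate ($k^{-32}$ vs.\ the paper's $k^{-24}$) is a harmless bookkeeping slip that is in any case absorbed into $O(k^{-24}\log^{-8}n)$.
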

\begin{proof}%
	Let $T_i = V_i \cap T$, where $T$ is the sample set with $t$ vertices from the algorithm. Let $\lambda_1=\frac{bt}{4k^2}$. Note that $\lambda_1^2/t \geq 4\log n$ by our setting $t=\frac{8c_0 k^4\log n}{b^2\delta^2}$.  

	We first note that (over the randomness of sampling the vertex set $T$)
	\begin{itemize}
		\item \text{for any $i\leq h$, } it holds that $\E[|T_i|]\geq (\frac{1}{k}-\frac{h b}{k^2})t$. Thus, by Chernoff--Hoeffding bound (Theorem \ref{thm:chernoff}), with probability at least  $1-e^{-2 \lambda_1^2/t}\geq 1-n^{-8}$, 
		\begin{eqnarray}
			|T_i|\geq \left(\frac{1}{k}-\frac{hb}{k^2}\right) t - \lambda_1 = \left(\frac{1}{k}-\frac{(h+1/4)b}{k^2}\right)t \label{eqn:upper1}
		\end{eqnarray}
		
		\item \text{for any $i> h$, } it holds that $\E[|T_i|]< (\frac{1}{k}-\frac{(h+1)b}{k^2})t=(\frac{1}{k}-\frac{hb}{k^2})t-\frac{bt}{k^2}$. Thus, with probability at least $1-e^{-2 \lambda_1^2/t}\geq 1-n^{-8}$, 
		\begin{eqnarray}
			|T_i|< \left(\frac{1}{k}-\frac{hb}{k^2}\right)t-\frac{bt}{k^2} + \lambda_1 \leq  \left(\frac{1}{k}-\frac{(h+3/4)b}{k^2}\right)t \label{eqn:lower1}
		\end{eqnarray}
	\end{itemize}
	In the following, we assume the inequalities (\ref{eqn:upper1}) and (\ref{eqn:lower1}) hold for all $i\leq k$, which occur with probability at least $1-n^{-7}$ by the union bound. 
	
	Now we analyze the vertex degrees of vertices in the queried graph $H_T$. We first note that for any $v \in T_i$, its expected degree is  
	\[
	\left(\frac{1}{2}+\frac{\delta}{2}\right) |T_i| + \left(\frac{1}{2}-\frac{\delta}{2}\right) |T\setminus T_i| = \left(\frac{1}{2} -\frac{\delta}{2}\right) |T| + \delta |T_i|
	\]

	Let $\lambda_2=\frac{bt\delta}{4k^2}$. Note that $\lambda_1^2/t \geq 4\log n$ by our setting. Now we have that
	
	\begin{itemize}
		\item for any $i\leq h$ and vertex $v \in T_i$, then its expected degree is at least 
		\[
		\left(\frac{1}{2} -\frac{\delta}{2}\right) t + \delta b_h \geq \left(\frac{1}{2} -\frac{\delta}{2}\right) t + \delta\cdot  \left(\frac{1}{k}-\frac{(h+1/4)b}{k^2}\right)t.
		\]
		Thus, over the randomness of querying the oracle regarding vertices in $T$, with probability at least $1-e^{-2 \lambda_2^2/t}\geq 1-n^{-8}$, the degree of $v$ is at least
		\[
		\left(\frac{1}{2} -\frac{\delta}{2}\right) t +  \delta\cdot  \left(\frac{1}{k}-\frac{(h+1/4)b}{k^2}\right)t -\lambda_2 =\left(\frac{1}{2} -\frac{\delta}{2}\right) t +  \delta\cdot  \left(\frac{1}{k}-\frac{(h+1/2)b}{k^2}\right)t
		\]
		
		\item for any $i>h$ and  vertex $v \in T_i$, its expected degree is less than 
		\[
		\left(\frac{1}{2} -\frac{\delta}{2}\right) t + \delta |T_i|\leq \left(\frac{1}{2} -\frac{\delta}{2}\right) t + \delta\cdot  \left(\frac{1}{k}-\frac{(h+3/4)b}{k^2}\right)t
		\]
		
		Thus, with probability at least $1-e^{-2 \lambda_2 ^2/t}\geq 1-n^{-8}$, the degree of $v$ is less than
		\[
		\left(\frac{1}{2} -\frac{\delta}{2}\right) t + \delta\cdot  \left(\frac{1}{k}-\frac{(h+3/4)b}{k^2}\right)t+\lambda_2=\left(\frac{1}{2} -\frac{\delta}{2}\right) t +  \delta\cdot  \left(\frac{1}{k}-\frac{(h+1/2)b}{k^2}\right)t
		\]
	\end{itemize}
	
	Let $d_h:=(\frac{1}{2} -\frac{\delta}{2}) t +  \delta\cdot  \left(\frac{1}{k}-\frac{(h+1/2)b}{k^2}\right)t=\frac{t}{2}-\left(\frac12-\frac{1}{k}+\frac{(h+1/2)b}{k^2}\right)\delta t$.  That is, with probability at least $1-n^{-7}$, all vertices in $T_{1},\dots,T_h$ have degree at least $d_h$, and all vertices in $T_{h+1},\dots,T_k$ have degree less than $d_h$. Then by the description of the algorithm,  $T'=\cup_{i\leq h} T_i$. 
	
	Now we note that $H_{T'}\sim \rG(T_1,\dots,T_h,\delta)$, and that the number 
	of clusters in $H_{T'}$ is $h$. Now we apply \textsc{BalPartition}($H_{T'},h,\delta,b''$) on $H_{T'}$.  Recall that we have chosen $t=\frac{8c_0 k^4\log n}{b^2\delta^2}$. Note that we only need to consider the case that $t\leq n$ (as otherwise, we can simply query the whole graph). 
	Now we note that 
	\[t\geq |T'|\geq \sum_{i=1}^h|T_i|\geq h\cdot\left(\frac{1}{k}-\frac{(h+1/4)b}{k^2}\right)t\geq \frac{ht}{2k}
	\]
	Furthermore, we know for each $i\leq h$, 
	\[
	|T_i|\geq \left(\frac{1}{k}-\frac{(h+1/4)b}{k^2}\right)t\geq \frac{t}{2k}\geq \frac{|T'|}{2k} =\frac{h}{2k}\cdot\frac{|T'|}{h}.
	\]	
	
	Thus, if we set $b''=\frac{h}{2k}$, then the partition $T_1,\cdots,T_h$ is $b''$-balanced. Note that $|T'|\geq \frac{ht}{2k}\geq \frac{4 c_0 k^3 \log n}{b^2\delta^2}$. Thus, 
	\[
	\log|T'|\leq \log t \leq  \log n
	\]
	\[
	\frac{|T'|}{\log|T'|} \geq \frac{4c_0k^2}{h^2}\cdot \frac{h^2}{\delta^2}\cdot \frac{\log n}{\log t}\geq \frac{c_0}{b''^2}\cdot \frac{h^2}{\delta^2}
	\]
	Thus by Theorem \ref{thm:Vu18}, the algorithm \textsc{BalPartition}($H_{T'},h,\delta,b''$) successfully recover all the clusters $T_1,\dots,T_h$ with probability at least $1-|T'|^{-8}\geq 1-O((b\delta)^{16} k^{-24}\log^{-8} n{})$.

\end{proof}

\subsection{Finding a good index $h$}

In the previous section, we presented an algorithm for finding clusters assuming that the index $h$ that corresponds to the size-gap is known, and we have shown that the algorithm \textsc{GapClustering}($V,h,\delta,b$) outputs $h$ sub-clusters from the sampled set $T$.  However, in the general case, we do not know this index $h$. To handle this issue, %
we enumerate all possible candidates $h$ for $1\leq h\leq k$, and use a subroutine to test if the current candidate $h$ is `right' or not, which in turn makes use of a procedure for testing the bias of a given set. 
We first describe the algorithm for testing the bias of a set. Its performance is guaranteed in Lemma \ref{lemma:bias_two_cases}.%
\begin{algorithm}[H]
	
	\caption{\textsc{TestBias}($n, B,\eta$): test if a set $B$ is $(\eta,C)$-biased for some cluster $C$}
	\label{ }
	\begin{algorithmic}[1]
		\FOR{$i=1,\cdots, \frac{16k\cdot \log n}{b}$} 
		\STATE Randomly sample a vertex $v_i$ and query all the pairs $v_i,u$ for $u\in B$
		\IF{the number of `$+$' neigbhors of $v_i$ in $B$ is at least $(\frac{1}{2}+\frac{1}{2}\eta\delta)|B|$}
 {\RETURN $\Y$}
		\ENDIF
		\ENDFOR
		{ \RETURN $\N$}
		
	\end{algorithmic}
	
\end{algorithm}

\begin{lemma}\label{lemma:bias_two_cases}
	Let $B$ be a vertex set of size at least $\frac{64\log n}{\eta^2\delta^2}$. There exists one algorithm \textsc{TestBias}($n,B,\eta$) that with probability at least $1-n^{-7}$,  
	\begin{itemize}
		\item accepts $B$, if $B$ is $(\eta,C)$-biased for some cluster $C$ of size at least $\frac{bn}{k}$, i.e., $|B\cap C|\geq (1/2+\eta)\cdot |B|$
		\item rejects $B$, if $B$ is not $(\frac{\eta}{4},C)$-biased for any $C$, i.e., for any $C$, $|B\cap C|<(1/2+\frac{\eta}{4})\cdot |B|$.  
	\end{itemize}	
	
\end{lemma}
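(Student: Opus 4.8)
}
The plan is to reduce the whole statement to one concentration fact about a single uniformly random vertex --- essentially the computation already carried out in the proof of Lemma~\ref{lemma:setB} --- and then to handle the two promised cases separately: in the completeness case I show that the sampling loop must hit the biased cluster $C$, and in the soundness case I show that \emph{no} sampled vertex can ever make the acceptance test succeed.

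First I would fix an arbitrary vertex $v\in V$, write $B_v$ for the set of vertices of $B$ in the same cluster as $v$, and note that the number $\cnt(v)$ of `$+$' answers among the pairs $(v,u)$ with $u\in B$ has expectation $(\frac12-\frac\delta2)|B|+\delta|B_v|$. With the deviation parameter $\lambda=\frac{\eta\delta|B|}{4}$, the hypothesis $|B|\ge\frac{64\log n}{\eta^2\delta^2}$ gives $\lambda^2/|B|\ge 4\log n$, so the Chernoff--Hoeffding bound (Theorem~\ref{thm:chernoff}) shows $\cnt(v)$ lies within $\lambda$ of its mean except with probability at most $2n^{-8}$. Treating all oracle answers as fixed in advance (persistent noise), I would union bound over all $v\in V$ to obtain an event $\mathcal E$ of probability at least $1-n^{-7}$ (adjusting the constant in the size bound if needed) on which $\cnt(v)$ is within $\lambda$ of $(\frac12-\frac\delta2)|B|+\delta|B_v|$ for \emph{every} $v\in V$. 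The key point is that $\mathcal E$ is independent of which vertices $v_1,v_2,\dots$ get sampled by \textsc{TestBias}, so I may condition on it and then reason purely about the sampling.

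For completeness, suppose $B$ is $(\eta,C)$-biased with $|C|\ge\frac{bn}{k}$. On $\mathcal E$, any sampled $v_i\in C$ has $|B_{v_i}|=|B\cap C|\ge(\frac12+\eta)|B|$, hence $\cnt(v_i)\ge(\frac12+\eta\delta)|B|-\lambda=(\frac12+\tfrac34\eta\delta)|B|>(\frac12+\tfrac12\eta\delta)|B|$, and the algorithm returns $\Y$. Since each $v_i$ is a uniformly random vertex and $|C|/n\ge b/k$, the probability that none of the $\frac{16k\log n}{b}$ samples lands in $C$ is at most $(1-\tfrac bk)^{16k\log n/b}\le n^{-16}$; together with $\Pr[\neg\mathcal E]$ this yields acceptance with probability at least $1-n^{-7}$, absorbing lower-order terms into the constant exactly as in Lemma~\ref{lemma:setB}. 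For soundness, suppose $B$ is not $(\tfrac\eta4,C)$-biased for any cluster $C$, i.e.\ $|B\cap C|<(\frac12+\tfrac\eta4)|B|$ for every $C$. Then on $\mathcal E$ every sampled $v_i$, lying in some cluster $C'$, satisfies $\cnt(v_i)\le(\frac12-\frac\delta2)|B|+\delta|B_{v_i}|+\lambda<(\frac12+\tfrac{\eta\delta}4)|B|+\lambda=(\frac12+\tfrac12\eta\delta)|B|$, so the acceptance test fails in each of the $\frac{16k\log n}{b}$ iterations and the algorithm returns $\N$; this happens whenever $\mathcal E$ does.

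The step I expect to require the most care is the bookkeeping around the slack $\lambda=\frac{\eta\delta|B|}{4}$: it has to be small enough that a vertex of the biased cluster still pushes $\cnt$ strictly above the threshold $(\frac12+\frac12\eta\delta)|B|$ hard-coded in \textsc{TestBias}, and simultaneously large enough --- relative to the factor-$4$ gap between the two promise parameters $\eta$ and $\eta/4$ --- that a vertex in any $(\eta/4)$-unbiased configuration stays strictly below that same threshold. That factor of $4$ is exactly what makes both inequalities compatible with the single threshold the algorithm uses. Everything else is routine: one Hoeffding estimate per vertex, a union bound over at most $n$ vertices, and a one-line bound on the probability that the sampling loop misses a vertex set of density at least $b/k$.
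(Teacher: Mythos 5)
Your proposal is correct and follows essentially the same route as the paper: one Chernoff--Hoeffding estimate per vertex with slack $\lambda=\frac{\eta\delta|B|}{4}$ (yielding the matching thresholds $(\tfrac12+\tfrac34\eta\delta)|B|$ and $(\tfrac12+\tfrac12\eta\delta)|B|$ around the hard-coded test value), a union bound, and a $(1-b/k)^{16k\log n/b}\le n^{-16}$ bound on the sampler missing the dense cluster $C$. The only cosmetic difference is that you make the conditioning explicit by first defining the event $\mathcal{E}$ over all of $V$ and observing that persistent noise makes it independent of the sample sequence, whereas the paper folds this into a per-sample union bound; both yield $1-O(n^{-7})$, with the same minor looseness in the constant that you already flag.
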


\begin{proof}%
	We first consider the case that $B$ is $(\eta, C)$-biased for some cluster $C$ of size at least $\frac{bn}{k}$. Note that with probability at least $1-n^{-8}$, one of the sampled $\frac{16k\log n}{b}$ vertices will belong to $C$, as $|C|\geq \frac{bn}{k}$.
	
	Furthermore, by the same calculations as the inequality (\ref{eqn:biaslowerbound}) in the proof of Lemma \ref{lemma:setB}, we know that with high probability, the $+$ neighbors of $v$ is at least 
	$(\frac{1}{2}+\frac{1}{2}\eta\delta)|B| $, 
	then \textsc{TestBias}($n,B,\eta$) will return $\Y$.
	
	Now suppose that $B$ is not $(\frac{\eta}{4},C)$-biased for any $C$. For any vertex $v\in V$, let $B_v$ be the set of vertices in $B$ in the same cluster as $v$. Then $|B_v|< (\frac12+\frac{\eta}{4})|B|$. The expected number of `$+$' neighbors of $v$ is 
	\[
	\left(\frac{1}{2}-\frac{\delta}{2}\right) |B| + \delta |B_v| \leq  \left(\frac{1}{2}-\frac{\delta}{2}\right)|B|+\delta \left(\frac12+\frac{\eta}{4}\right)|B| = \left(\frac12+\frac{\eta \delta }{4}\right)|B|
	\] 
	Let $\lambda=\frac{\eta\delta|B|}{4}$. Note that $\lambda^2/|B|\geq 4\log n$ as $|B|\geq \frac{64\log n}{\eta^2\delta^2}$. By Chernoff--Hoeffding bound, with probability at least $1-e^{-2t^2/|B|}\geq  1-n^{-8}$, the number of $+$ neighbors of $v$ is less than
	$(\frac12+\frac{\eta \delta }{4})|B|+\lambda=(\frac{1}{2}+ \frac{\eta \delta}{2})|B|$.
	In this case, the \textsc{TestBias}($n,B,\eta$) will return $\N$.
\end{proof}

Now we describe our idea for finding a good index $h$ and the corresponding sub-clusters. For each $h\in[k]$, we first ``pretend'' that the gap is $h$, and invoke \textsc{GapClustering}($V,h,\delta,b$) to find $h$ different sets $X_1,\cdots, X_h$ (or invoke \textsc{BalancedClustering}($V,h,\delta,b$) if $h=k$). Then we select sufficiently large subsets $X_i'\subset X_i$, $1\leq i\leq h$, and test if all of the sets $X_i'$ are sub-clusters by invoking a subroutine \textsc{TestBias}($n,X_i',\eta$). If so, we say the corresponding index $h$ is accepted, and the algorithm outputs the sets $X_1',\dots,X_h'$. 

\begin{algorithm}[H]
	\caption{\textsc{EnumerateIndex}($V,k,\delta,b,\eta$): find a good index $h$ and the corresponding sub-clusters }\label{}
	\begin{algorithmic}[1]
		\STATE Let $n=|V|$ 
		\FOR{$h=k,\dots,1$}
		\IF{$h==k$}
		\STATE Invoke \textsc{BalancedClustering}$(V,h,\delta,b)$ to find $h$ clusters $X_1,\dots, X_h$
		\ELSE 
		\STATE Invoke \textsc{GapClustering}$(V,h,\delta,b)$ to find $h$ clusters $X_1,\dots, X_h$
		\ENDIF
		\STATE\label{alg:xipri} For each $i\leq h$, let $X_j'$ be an arbitrary subset of $X_j$ of size $\frac{256\log n}{\eta^2\delta^2}$ 
		\IF{for all $i\leq h$, \textsc{TestBias}($n, X_i',\eta$) returns $\Y$}
 \RETURN $X_1',\dots,X_h'$
		\ENDIF
		\ENDFOR
	\RETURN \textbf{Fail}.
	\end{algorithmic}
	
\end{algorithm}
We have the following lemma regarding the performance guarantee of the above algorithm. %
\begin{lemma}\label{lemma:enumerate}
	Let $\eta^2/b\geq 64/c_0$, where $c_0$ is the constant from Theorem \ref{thm:Vu18}. It holds that with probability at least $1-n^{-6}$, 
	\begin{itemize}
		\item there exists an index $h\in[k]$ such that \textsc{EnumerateIndex}($V,k,\delta,b,\eta$) will output $h$ sets $X_1',\dots,X_h'$; 
		\item if $X_1',\dots,X_h'$ are the sets output by \textsc{EnumerateIndex}($V,k,\delta,b,\eta$), then each of them is $(\eta/4,C)$-biased for some cluster $C$. %
	\end{itemize}
\end{lemma}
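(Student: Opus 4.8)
The plan is to reduce both claims to two facts established earlier: at the ``correct'' value of the index, the clustering subroutine produces genuine sub-clusters, and \textsc{TestBias} is both complete and sound in the sense of Lemma~\ref{lemma:bias_two_cases}. First I would pin down what the correct index is: if the underlying partition $V_1,\dots,V_k$ is $b$-balanced, it is $h=k$ (the very first value tried by \textsc{EnumerateIndex}); otherwise $s_k<bn/k$, and Lemma~\ref{lemma: sizegap} supplies an index $h^\star<k$ realizing the stated size-gap, which is the correct value. I then define two good events. The event $\mathcal{E}_1$ says that, on the randomness used by the call at the correct index, \textsc{BalancedClustering}$(V,k,\delta,b)$ (resp.\ \textsc{GapClustering}$(V,h^\star,\delta,b)$) returns sets $X_1,\dots$ each of which equals a true sub-cluster $T\cap V_j\subseteq V_j$; by Lemmas~\ref{lemma: balanced sampling} and \ref{lemma: McSherry} (resp.\ Lemma~\ref{lemma:knowngap}) this fails with small probability. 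The event $\mathcal{E}_2$ says that each of the at most $k^2$ invocations of \textsc{TestBias}$(n,\cdot,\eta)$ that can occur behaves as in Lemma~\ref{lemma:bias_two_cases}; since every tested set $X_i'$ has size $\tfrac{256\log n}{\eta^2\delta^2}\ge\tfrac{64\log n}{\eta^2\delta^2}$, a union bound controls its failure probability. I would run the rest of the argument on $\mathcal{E}_1\cap\mathcal{E}_2$, which holds with probability at least $1-n^{-6}$ after a routine union bound over the error events of those lemmas.

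The second bullet (soundness) uses only $\mathcal{E}_2$ and is essentially immediate: \textsc{EnumerateIndex} returns $X_1',\dots,X_h'$ at an index $h$ only when \textsc{TestBias}$(n,X_i',\eta)$ accepted for every $i\le h$, so by the contrapositive of the second bullet of Lemma~\ref{lemma:bias_two_cases} each such $X_i'$ is $(\eta/4,C)$-biased for some true cluster $C$, which is exactly the claim.

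For the first bullet I would rule out the \textbf{Fail} output. Walking down $h=k,k-1,\dots$, either some index is accepted before the correct one (and then the algorithm already returns some sets, so we are done), or the enumeration reaches the correct index. At that index I would verify three things. First, every returned $X_i$ equals a genuine sub-cluster of some cluster $V_j$ with $|V_j|\ge \tfrac nk-\tfrac{bn}{k}\ge\tfrac{bn}{k}$, using $b\le\tfrac12$ (in the unbalanced case this follows from $s_{h^\star}\ge\tfrac nk-\tfrac{h^\star bn}{k^2}$, and in the balanced case directly). Second, the hypothesis $\eta^2/b\ge 64/c_0$ guarantees $|X_i|\ge\tfrac{256\log n}{\eta^2\delta^2}$, so $X_i'$ is well defined — here I would plug in the explicit lower bounds $|T\cap V_j|\ge \tfrac{t}{2k}$ from the proof of Lemma~\ref{lemma:knowngap} and $|T\cap V_j|\ge b'|T|/k$ from \textsc{BalancedClustering}. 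Third, since $X_i'\subseteq X_i\subseteq V_j$, the set $X_i'$ is $(\tfrac12,V_j)$-biased, hence $(\eta,V_j)$-biased as $\eta\le\tfrac12$, and $|V_j|\ge bn/k$; so by the first bullet of Lemma~\ref{lemma:bias_two_cases} and $\mathcal{E}_2$, \textsc{TestBias}$(n,X_i',\eta)$ returns $\Y$ for every $i$, the ``if'' fires, and the algorithm outputs. A bookkeeping point to dispatch in passing: at a wrong earlier index some $X_i$ might be too small to host a subset of the prescribed size, in which case I would simply treat that index as rejected and continue, which is harmless for the correct index.

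The point that needs the most care is that for every \emph{wrong} index $h$ (and for $h=k$ when the instance is unbalanced), \textsc{GapClustering}/\textsc{BalPartition} carries no correctness guarantee: after the degree-pruning step the graph $H_{T'}$ need not be an SBM instance, so \textsc{BalPartition} may return essentially arbitrary sets. The argument is structured so this is benign — \textsc{TestBias} acts as a filter that, by the soundness half of Lemma~\ref{lemma:bias_two_cases}, prevents any non-$(\eta/4,C)$-biased set from ever being output, so the second bullet is never threatened; and the first bullet only needs \emph{one} index guaranteed to be accepted, which is precisely what the size-gap lemma together with $\mathcal{E}_1$ provide. The only genuinely quantitative step is checking that the constant in $\eta^2/b\ge 64/c_0$ was chosen so that the sub-clusters produced at the correct index are large enough to contain the test sets $X_i'$.
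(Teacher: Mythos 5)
Your proof is correct and follows essentially the same route as the paper's: pin down the ``correct'' index (either $h=k$ in the balanced case or the size-gap index $h^\star$ from Lemma~\ref{lemma: sizegap}), invoke Lemmas~\ref{lemma: McSherry} and \ref{lemma:knowngap} to get genuine sub-clusters there, check they are large enough using $\eta^2/b\geq 64/c_0$ so that \textsc{TestBias} accepts, and use the soundness half of Lemma~\ref{lemma:bias_two_cases} to establish the second bullet. Your write-up is a bit more explicit than the paper's (the good-event decomposition, noting that a wrong index may produce sets too small to host $X_i'$ and should just be treated as rejected), but these are presentational rather than substantive differences.
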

\begin{proof}%
	If the instance is $b$-balanced, then we let $h=k$, and by Lemma \ref{lemma: McSherry}, \textsc{BalancedClustering}($V,h,\delta,b$) outputs all the sub-clusters $X_1,\dots,X_h$ from the sample set $T$. If the instance is not $b$-balanced, then by Lemma \ref{lemma:knowngap}, there exists an index $h\in[1,k-1]$ that corresponds to size-gap, and thus all the output sets $X_i$ by \textsc{GapClustering}$(V,h,\delta,b)$ are sub-clusters. In both cases, we know that $X_i$'s are $(\frac12,C)$-biased for some cluster $C$. Now by the previous argument, we can guarantee that each of the set $X_i$ has size at least $\frac{200c_0k\log n}{b\delta^2}$ (in case that $h=k$) or $\frac{4c_0k^3\log n}{b^2\delta^2}$ (in case that $h\leq k-1$), and thus larger than $\frac{256\log n}{\eta^2\delta^2}$, as $\eta^2/b\geq 64/c_0$ by assumption. Therefore, we can find subsets $X_i'$, $1\leq i\leq h$ of size $\frac{256\log n}{\eta^2}$ that are $(\frac12,C)$-biased for some cluster $C$. Thus, by Lemma \ref{lemma:bias_two_cases}, for all $i\leq h$, \textsc{TestBias}($n,X_i',\eta$) will be accepted with high probability. 

	Now we prove the second item of the lemma. Let $h$ be an index such that $1\leq h\leq k$. Let $X_1',\dots,X_h'$ be the sets corresponding to Step \ref{alg:xipri} of the algorithm \textsc{EnumerateIndex}. Let $\mathcal{E}_h$ denote the event that there exists one of the sets $X_i'$, $1\leq i\leq h$ is not $(\frac{\eta}{4},C)$-biased for any $C$. For any $h$ such that $\mathcal{E}_h$ holds, we know that with probability at least $1-n^{-7}$, one of tests \textsc{TestBias}($n,X_i',\eta$) will return $\N$ and thus 
	$h$ will not be accepted. Therefore, we can assume that for any $h\leq k$ such that $\mathcal{E}_h$ holds, $h$ will be rejected, which happens with probability at least $1-n^{-6}$. Furthermore, under this assumption, we have that if $h$ is accepted, then $\mathcal{E}_h$ does not hold, i.e., all the sets $X_i'$, $1\leq i\leq h$ are $(\frac{\eta}{4},C)$-biased for some cluster $C$. 	
\end{proof}

\subsection{The final algorithm}
Our algorithm is outlined as follows. %
\begin{itemize}
	\item Initialize $U=V$ and suppose the number of clusters in the current graph $G[U]$ is $k_c$, which equals $k$ at very beginning. Repeat the following until $U$ has small enough size or $k_c\leq 1$. 
	\begin{itemize}
		\item Use \textsc{EnumerateIndex}($U,k_c,\delta,b,\eta$) to find $h$ sets $X'_1,\dots,X'_h$, for some $h\leq k_c$. 
		\item Grow the found sets $X'_1,\dots,X'_h$ to find the clusters $C_1,\dots,C_h$ . 
		\item Update $k_c$ to be $k_c-h$, and remove all the clustered vertices from $U$. 
	\end{itemize}
	\item Output all the found clusters $C_i$'s. 
\end{itemize}
The psuedocode of the algorithm is as follows.

\begin{algorithm}[H]
	
	\caption{\textsc{NoisyClustering}($V,k,\delta$): the final clustering algorithm}\label{algorithm: noisyclustering}
	\begin{algorithmic}[1]
		\STATE Let $U=V$; let $k_c=k$ be the number of clusters in current graph; let $j=0$ be the number of clusters found so far; let $c_0$ be the universal constant from Theorem \ref{thm:Vu18}; let $b=\eta=0.1$
		\WHILE{$|U|\geq \frac{40000c_0k^4\log n}{\delta^2}$ and $k_c\geq 2$}%
		\STATE Invoke \textsc{EnumerateIndex}($U,k_c,\delta,b,\eta$) and let $X_1',\dots,X_h'$ denote the output $h$ sets. 
		
		\FOR{each $i\in[h]$}\STATE 
		$C_{j+i}\gets \{v\in U: \textsc{BelongToCluster}(v,X_i') \text{ returns $\Y$} \}$
		\STATE $U\gets U\setminus C_{j+i}$
		\ENDFOR
		\STATE $j\gets j+h$
		\STATE $k_c\gets k_c-h$
		\ENDWHILE			
		\RETURN all the clusters $C_i$'s
	\end{algorithmic}
	
\end{algorithm}

\begin{proof}[Proof of Theorem \ref{thm:main}] Since we have set $b=\eta=0.1$, it holds that $\eta^2/b\geq 64/c_0$ as $c_0\geq 1000$ by Theorem \ref{thm:Vu18}. By Lemma \ref{lemma:enumerate}, we know Algorithm \ref{algorithm: noisyclustering} will output $X_1',\dots,X_h'$ for some $h\leq k_c$, and each of these sets is $(\frac{\eta}{4},C)$-biased for some cluster $C$. Then by Lemma \ref{lemma:setB}, we can grow each $X_i'$ to get the true cluster $C$.  Note that at least one cluster will be found in each iteration, and the error probability in each iteration is at most $o_n(1)/k$ (by Lemma \ref{lemma: McSherry} and \ref{lemma:knowngap}). The final algorithm thus succeeds with probability $1-o_n(1)$ as there are at most $k$ iterations. The correctness of the algorithm then follows from the fact that the algorithm stops when all the $k$ clusters have been identified or the size of the remaining graph becomes smaller than $\frac{40000c_0k^4\log n}{\delta^2}$. 
	
	Now we bound the query complexity of the algorithm. Note that there are at most $k$ iterations. In each iteration, we invoke \textsc{EnumerateIndex} to try all $k$ possible values of $h$. For each $h$, we will sample at most $t=\frac{400c_0k^4\log n}{b^2\delta^2}=\frac{40000c_0k^4\log n}{\delta^2}$ vertices and query the induced subgraph by making $t^2$ queries for finding biased sets. To test the bias of each candidate set $X_i'$ (i.e., invoke \textsc{TestBias}($n,X_i',\eta$)), we only need to sample $\Theta(\frac{k\log n}{b})$ vertices and make $O(\frac{k\log n}{b}\cdot \frac{\log n}{\eta^2\delta^2})$ queries.  For the accepted index $h$, i.e., \textsc{EnumerateIndex} outputs $h$ sets $X_1,\dots,X_H$, we will make use of the subsets $X_1',\dots,X_h'$ to grow the clusters, and growing any set $X_i'$ to the true cluster requires at most $\frac{256\log n}{\eta^2 \delta^2} n$ queries. %
	Finally, we note that there can be at most $k$ subsets $X_i'$ throughout the whole procedure that we will use to grow the clusters. Thus, the total query complexity is $O(k^2t^2+kn\log n/\delta^2)=O(\frac{k^{10}\log^2 n}{\delta^4}+\frac{n k\log n}{\delta^2})$.%
	
	Regarding the running time, we let $T(t,k,\delta)=\textrm{poly}(t,k,1/\delta)$ denote the running time of \textsc{BalPartition} (in Theorem \ref{thm:Vu18}) on a set of size $t$. The running time for \textsc{TestBias}($n,T_i,\eta$) is proportional to the size $T_i$ and the running time of using \textsc{BelongToCluster} to identify each cluster is at most $tn$. Thus, the total running time is $O(k^2T(t,k,\delta)+kn\log n/\delta^2)=O((\frac{k\log n}{\delta})^C+\frac{nk\log n}{\delta^2})$, for some constant $C>0$.  
\end{proof}

\bibliographystyle{plainnat}
\bibliography{clustering}

\appendix
\begin{center}\huge\bf Appendix \end{center}

\section{Preliminaries}

We will make use of the following Chernoff--Hoeffding bound (see Theorem 1.1 in \cite{dubhashi2009concentration}).
\begin{theorem}[The Chernoff--Hoeffding bound]\label{thm:chernoff}
	Let $t\geq 1$. Let $X:=\sum_{1\leq i\leq t}X_i$, where $X_i, 1\leq i\leq t$, are independently distributed in $[0,1]$. Then for all $\lambda>0$, 
	\[
	\Pr[X>\E[X]+\lambda], \Pr[X<\E[X]-\lambda] \leq e^{-2\lambda^2/t}.
	\]
\end{theorem}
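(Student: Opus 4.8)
The statement is the classical Hoeffding inequality, and the plan is to prove it via the exponential-moment (Chernoff) method combined with Hoeffding's lemma bounding the moment generating function of a bounded mean-zero random variable.

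I would start with the upper tail $\Pr[X > \E[X] + \lambda]$. Set $Y_i := X_i - \E[X_i]$, so the $Y_i$ are independent, mean-zero, and each takes values in an interval of length $1$. For any $s > 0$, Markov's inequality applied to $e^{s\sum_i Y_i}$ gives
\[
\Pr\Big[\sum_i Y_i > \lambda\Big] \le e^{-s\lambda}\,\E\big[e^{s\sum_i Y_i}\big] = e^{-s\lambda}\prod_{i=1}^t \E[e^{sY_i}],
\]
where the last equality uses independence of the $Y_i$.

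The key ingredient is Hoeffding's lemma: if $Y$ has $\E[Y] = 0$ and $Y \in [a,b]$ almost surely, then $\E[e^{sY}] \le e^{s^2(b-a)^2/8}$ for all real $s$. I would prove it by writing $\psi(s) := \log \E[e^{sY}]$ and checking $\psi(0) = 0$, $\psi'(0) = \E[Y] = 0$, while $\psi''(s)$ is the variance of $Y$ under the exponentially tilted law $d\mu_s \propto e^{sY}\,d\mu$; since that law is still supported in $[a,b]$, its variance is at most $(b-a)^2/4$ (the variance of any $[a,b]$-valued random variable is maximized, at $(b-a)^2/4$, by the two-point distribution on the endpoints). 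Taylor's theorem with Lagrange remainder then yields $\psi(s) \le s^2(b-a)^2/8$. Alternatively one can avoid calculus by bounding $e^{sY}$ pointwise by the chord of $t \mapsto e^{st}$ through $(a,e^{sa})$ and $(b,e^{sb})$, taking expectations using $\E[Y]=0$, and analyzing the resulting one-variable function; either argument is standard.

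Applying Hoeffding's lemma to each $Y_i$ (here $b-a = 1$) gives $\E[e^{sY_i}] \le e^{s^2/8}$, hence $\prod_{i=1}^t \E[e^{sY_i}] \le e^{ts^2/8}$ and $\Pr[\sum_i Y_i > \lambda] \le e^{-s\lambda + ts^2/8}$. Minimizing the exponent over $s > 0$ — the minimum of $-s\lambda + ts^2/8$ is attained at $s = 4\lambda/t$ with value $-2\lambda^2/t$ — yields $\Pr[X > \E[X] + \lambda] \le e^{-2\lambda^2/t}$. The lower tail follows by running the identical argument with $-X_i$ in place of $X_i$ (which still has range $1$), giving $\Pr[X < \E[X] - \lambda] \le e^{-2\lambda^2/t}$ as well. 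The only step requiring real care is Hoeffding's lemma — specifically the variance bound $\psi''(s) \le (b-a)^2/4$ and the clean passage from it to the MGF estimate via Taylor's theorem; the rest is the routine Chernoff packaging and a one-line optimization in $s$.
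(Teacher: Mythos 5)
Your proof is correct and is the standard Chernoff--Hoeffding argument: Markov's inequality on the moment generating function, Hoeffding's lemma (via the variance bound on the tilted distribution, or the chord argument) to control each factor, optimization over the tilt parameter $s$, and symmetry for the lower tail. The paper itself does not prove this statement --- it cites it as Theorem~1.1 of Dubhashi and Panconesi's book --- so there is no alternative proof to compare against; your argument is the canonical one and matches the cited source.
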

\section{Deferred Proofs from Section \ref{sec:subrountines}}\label{app:section_subrountines}
\subsection{Proof of Theorem \ref{thm:Vu18}}\label{app:Vu18}

We use $G\sim $ SBM($N,k,p,q$) to denote that the graph $G$ is generated from the SBM($N,k,p,q$) model. 
Let $C_u$ be the cluster that contains $u$, for any $u\in V$. The following was shown by  \citet{vu2018simple}.

\begin{theorem}[Theorem 1.2 in \cite{vu2018simple}]\label{thm:Vu_original}
	Let $G\sim$ SBM($N,k,p,q$). Let $s$ be the size of the minimum cluster. There exists a universal constant $c_1>20$ such that the following holds. Assume that 
	\[
	\sigma:=\sqrt{\max\{p(1-p),q(1-q)\}}\geq c_1\log N/N, s\geq c_1\log N, \text{ and } k=o((N/\log N)^{1/2}).
	\]
	Suppose further that for any $u,v$ that belong to two different clusters
	\[
	\sqrt{|C_u|+|C_v|}(p-q) \geq c_1 \left(\sigma \sqrt{\frac{N}{s}}+\sqrt{\log N}\right).
	\]
	Then there exists a polynomial time algorithm $\mathcal{A}$ that recovers all the clusters $V_1,\cdots,V_k$ of $G$, with probability at least $1-N^{-8}$.   
\end{theorem}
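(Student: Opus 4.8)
The plan is to prove this via the standard \emph{spectral} (``SVD-and-round'') strategy, which is the route of \citet{vu2018simple}. Let $A$ be the adjacency matrix of $G$ and $M:=\E[A]$. Since $G\sim$ SBM($N,k,p,q$), the matrix $M$ is block-constant off the diagonal and has rank at most $k+1$: writing $Z\in\{0,1\}^{N\times k}$ for the cluster-membership matrix and $J$ for the all-ones matrix, $M=(p-q)ZZ^\top+qJ$ up to a diagonal correction of operator norm $O(1)$. Two structural facts drive the proof: (i) the rows of $M$ are the \emph{cluster centers}, and for $u,v$ in distinct clusters $\|M_u-M_v\|_2=(p-q)\sqrt{|C_u|+|C_v|}$, which is exactly the quantity the separation hypothesis lower-bounds; and (ii) every nonzero singular value of $M$ is $\Omega((p-q)s)$, where $s$ is the size of the smallest cluster.

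First I would establish the concentration estimate $\|A-M\|_{\mathrm{op}}\le c(\sigma\sqrt N+\sqrt{\log N})$ with probability at least $1-N^{-9}$, for a universal constant $c$; this is where the hypothesis $\sigma\ge c_1\log N/N$ is used, since in the near-sparse regime a naive F\"uredi--Koml\'os bound fails and one needs its modern refinements (regularizing or deleting the $O(1)$ atypically high-degree vertices, or the matrix-Bernstein inequality and its variants). Let $\hat A$ be the rank-$(k+1)$ truncation of the SVD of $A$. By Weyl's inequality and $\sigma_{k+2}(M)=0$ we get $\|\hat A-A\|_{\mathrm{op}}=\sigma_{k+2}(A)\le\|A-M\|_{\mathrm{op}}$, hence $\|\hat A-M\|_{\mathrm{op}}\le 2\|A-M\|_{\mathrm{op}}$; and since $\hat A-M$ has rank $O(k)$,
\[
\|\hat A-M\|_F^2\ \le\ O(k)\cdot\|A-M\|_{\mathrm{op}}^2\ \le\ O\!\big(k(\sigma^2 N+\log N)\big).
\]

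Next, reading each row $\hat A_u$ as an estimate of the center $M_u$, Markov's inequality applied to the displayed Frobenius bound shows that for any threshold $r$ the number of vertices $u$ with $\|\hat A_u-M_u\|_2\ge r$ is $O\!\big(k(\sigma^2 N+\log N)/r^2\big)$; choosing $r$ on the scale $\tfrac14(p-q)\sqrt{|C_u|+|C_v|}$ and using the separation hypothesis (with $|C_u|+|C_v|\ge 2s$), the number of ``bad'' rows is $O(ks/c_1^2)$, and more carefully one tracks the bad count \emph{per cluster} against the \emph{size-dependent} margin, so that large clusters --- separated by far more than the worst-case gap --- absorb proportionally more error. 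I would then round: run a greedy ball-growing step on the points $\{\hat A_u\}_{u\in V}\subset\mathbb R^N$ (repeatedly pick an unclustered $u$, collect all $v$ with $\|\hat A_u-\hat A_v\|_2$ below a scale-appropriate threshold, declare a candidate core), followed by a cleanup pass that reassigns every vertex to the core with which it shares the most `$+$'-edges. By the above, a core grown from a good vertex of $V_i$ contains all good vertices of $V_i$ and no good vertex of another cluster; the cleanup corrects the misplaced bad vertices because each core has $\Omega(s)\ge\Omega(\log N)$ vertices, so $\deg(v,\text{core})$ concentrates and the majority vote is correct with probability $\ge1-N^{-9}$. Since the cluster sizes may differ wildly and $k$ may grow with $N$, a single threshold will not do; one may, e.g., recover clusters in decreasing order of size and \emph{recurse} on the residual graph, re-running the SVD at each stage so that the ``$k$'' governing the Frobenius error is the number of \emph{remaining} clusters.

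The main obstacle is exactly this last point: the Frobenius bound controls only the \emph{aggregate} row error, so the fraction of badly estimated rows scales with $k$, while a small cluster cannot tolerate even a constant fraction of bad rows. Making the rounding work uniformly across scales requires a careful, size-ordered (and possibly recursive) procedure together with the precise size-dependent form of the separation hypothesis; the near-sparse operator-norm bound is the other technically delicate ingredient. Everything else is a union bound over the $O(k)$ stages, each failing with probability $N^{-\Omega(1)}$, which gives the claimed overall success probability $1-N^{-8}$.
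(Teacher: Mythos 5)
This theorem is cited verbatim from \citet{vu2018simple} (their Theorem~1.2); the present paper does not prove it, and the appendix only derives the specialized Theorem~\ref{thm:Vu18} from it by a parameter translation. So there is no in-paper proof to compare against; I can only compare your sketch to Vu's actual argument. Your outline is the right kind of approach: an operator-norm concentration bound $\|A-M\|_{\mathrm{op}}\lesssim \sigma\sqrt N+\sqrt{\log N}$, rank reduction, a Frobenius/Markov count of ``bad'' rows measured against the size-dependent separation, then round and clean up. You also correctly isolate the central difficulty, namely that the Frobenius budget is aggregate while a small cluster cannot tolerate a $k$-dependent number of misestimated rows.

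Where you depart from Vu, there are two substantive points. First, Vu randomly splits $V$ into two halves, computes the top-$k$ singular space of the cross-adjacency matrix from one half, and projects the columns indexed by the other half onto it. This decoupling is what lets one obtain clean deviation bounds that respect the cluster-size scaling, rather than relying only on the aggregate Frobenius-then-Markov count you use; your row estimates $\hat A_u$ are not independent of the estimated subspace, and your sketch does not explain how to recover the per-cluster accounting without this trick. Second, Vu's rounding is a single greedy ball-growing pass on the projected points (iteratively peel off the densest ball of an appropriate radius), with the size-dependent separation hypothesis plugged directly into a per-cluster accounting of bad points; he does not recurse by re-running the SVD on the residual graph. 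Your size-ordered recursion is plausible as an alternative route, but it is a different argument, and you would need to verify that the residual subgraph inherits the separation hypothesis with the reduced $k$ and $N$ and that the accumulated failure probability stays below $N^{-8}$. As a blind reconstruction of a cited theorem this is a sensible high-level outline, but it is not the reference proof and it leaves the independence issue as a genuine gap.
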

Now we show that Theorem \ref{thm:Vu18} can be derived the above theorem.

\begin{proof}[of Theorem \ref{thm:Vu18}]
	Note that to recover the clusters of $G\sim\rG(V_1,\dots,V_k,\delta)$, it suffices to consider the SBM($N,k,p,q$) model with 
	$N=n$, $k$ and  $p=\frac{1}{2}+\frac{\delta}{2}$ and $q=\frac{1}{2}-\frac{\delta}{2}$. Furthermore, since the corresponding partition is $b$-balanced,  the size of the smallest cluster is $s\geq \frac{bn}{k}$. Let $c_0=8c_1^2$, where $c_1$ is the universal constant from Theorem \ref{thm:Vu_original}.

	Now we claim that the precondition of Theorem \ref{thm:Vu_original} is satisfied. By the assumption that $n\geq c_0(k^2\log n)/(b^2\delta^2)$, it hols that $k =o((N/\log N)^{1/2})$ and $s\geq \frac{bN}{k}\geq c_1\log N$. Note further that
	\begin{align*}
		\sigma=\sqrt{(\frac12+\frac{\delta}{2})(\frac12-\frac{\delta}{2})}=\sqrt{\frac14-\frac{\delta^2}{4}} \in [\frac{\sqrt{3}}{4}, \frac{1}{2}] \qquad \Longrightarrow \qquad \sigma \geq c_1 \log N/N
	\end{align*}
	where we used the assumption that $\delta\leq\frac{1}{2}$ and that $n$ is sufficiently large. 
	
	Furthermore, for any two different clusters, we have $|C_u|+|C_v|\geq \frac{2bn}{k}$. Note that
	\begin{align*}
		p-q=\delta,  \qquad &\Longrightarrow\qquad \sqrt{|C_u|+|C_v|}(p-q) \geq \delta\sqrt{\frac{2bn}{k}}, \\
		\sqrt{\frac{N}{s}}\leq \sqrt{\frac{k}{b}}, \qquad \sqrt{\log N}=\sqrt{\log n} \qquad &\Longrightarrow\qquad \sigma \sqrt{\frac{N}{s}}+\sqrt{\log N}\leq \frac{1}{2}\sqrt{\frac{k}{b}}+\sqrt{\log n}
	\end{align*}
	
	Then by the precondition that
	\[
	n\geq c_0 \frac{k^2}{b^2\delta^2}\log n\geq c_1^2 \left(\frac{k^2}{4b^2\delta^2}+\frac{k\log n}{b\delta^2}\right)
	\]
	we have that 
	\[
	\delta^2\frac{2bn}{k} \geq 2c_1^2(\frac{k}{4b}+\log n) \Longrightarrow \delta\sqrt{\frac{2bn}{k}} \geq c_1 \left(\frac12\sqrt{\frac{k}{b}}+\sqrt{\log n}\right),
	\]
	where we used the inequality $2x^2+2y^2\geq (x+y)^2$. Thus, 
	\[
	\sqrt{|C_u|+|C_v|}(p-q) \geq c_1 \left(\sigma \sqrt{\frac{N}{s}}+\sqrt{\log N}\right).
	\]
	
	Therefore, by Theorem \ref{thm:Vu_original}, with probability at least $1-n^{-8}$, we can recover all the clusters $V_1,\dots,V_k$ in polynomial time. 
	
\end{proof}

\end{document}